\pgfplotsset{compat=1.18}
\newcommand\figref{Figure~\ref}
\newcommand{\R}{\mathbb{R}}
\newcommand{\norm}[1]{\left\lVert#1\right\rVert}
\DeclareMathOperator{\rank}{rank}
\theoremstyle{plain}%
\newtheorem{theorem}{Theorem}%  meant for continuous numbers
\newtheorem{proposition}[theorem]{Proposition}% 
\newtheorem{lemma}[theorem]{Lemma}% 
\newtheorem{corollary}[theorem]{Corollay}% 
\newtheorem{example}{Example}%
\theoremstyle{remark}%
\newtheorem{remark}{Remark}%
\newtheorem{definition}{Definition}%
\begin{document}

\title{On the Convergence of Gradient Descent for Large Learning Rates}

%%=============================================================%%
%% GivenName	-> \fnm{Joergen W.}
%% Particle	-> \spfx{van der} -> surname prefix
%% FamilyName	-> \sur{Ploeg}
%% Suffix	-> \sfx{IV}
%% \author*[1,2]{\fnm{Joergen W.} \spfx{van der} \sur{Ploeg} 
%%  \sfx{IV}}\email{iauthor@gmail.com}
%%=============================================================%%

\author*[1]{\fnm{Alexandru} \sur{Cr\u aciun}}\email{acraciun@tum.de}

\author[1]{\fnm{Debarghya} \sur{Ghoshdastidar}}\email{ghoshdas@cit.tum.de}

\affil[1]{\orgname{Technical University of Munich}}

%%==================================%%
%% Sample for unstructured abstract %%
%%==================================%%

%abstract
\abstract{
A vast literature on convergence guarantees for gradient descent and derived
methods exists at the moment. However, a simple practical situation remains
unexplored: when a fixed step-size is used, can we expect gradient descent to
converge starting from any initialization? We provide fundamental
impossibility results showing that convergence becomes impossible no matter
the initialization if the step-size gets too big. Looking at the asymptotic
value of the gradient norm along the optimization trajectory, we see 
a sharp transition as the step-size crosses a critical value. This
has been observed by practitioners, yet the true mechanisms through which
it happens remain unclear beyond heuristics. Using results from
the theory of dynamical systems, we provide a proof of this in the case of linear
neural networks with a squared loss. We also prove the impossibility of
convergence for more general losses without requiring strong assumptions such as
Lipschitz continuity for the gradient. We validate our findings through
experiments with non-linear networks.
}

\keywords{Gradient descent, Dynamical systems, Lyapunov stability, Linear networks}
\pacs[MSC Classification]{90C26}

\maketitle

%introduction and related work
\section{Introduction}
Conventional analysis of the convergence of gradient-based algorithms has mostly
been carried out under two assumptions: the loss function $L$ has Lipschitz
continuous gradient and the step-size along the optimization trajectory $\eta$
is strictly smaller than the inverse of the greatest eigenvalue of the Hessian
of the loss at that point, commonly referred to as the \emph{sharpness}. As long
as these conditions hold, the ``Descent Lemma'' \parencite[Lemma
1.2.3]{nesterov2018} guarantees that the loss will decrease at each iteration of
gradient descent: 
\begin{equation} 
  \label{eq:descent_lemma}
  L(\theta_{k+1}) \leq L(\theta_k) - \eta\left(1-K\frac{\eta}{2}\right)\norm{\nabla L(\theta_k)}^2,
\end{equation} 
where $K$ is the Lipschitz constant of the gradient. Under these assumptions,
the previous equation shows that the loss converges to a fixed value if and only
if the gradient descent trajectory converges to a critical point of the loss.
For models used in practice, both the assumption of Lipschitz continuity of the
gradient and the bound on the step-size must be dropped, thus many analyses on the convergence
of gradient descent no longer hold. Motivated by the need of understanding 
models used in practice, we formulate the following question:

\textbf{Given an initial value $\theta_0$ and a fixed step-size $\eta
> 0$ will the iterates of gradient descent converge?}

Interest in this question has been increasing due to recent empirical studies
which have shown that even though $\eta$ does not satisfy the assumptions of the
Descent Lemma,  gradient descent still manages to decrease the loss, albeit in a
non-monotonic fashion, and stabilizes in a trajectory where the sharpness hovers
around the value of $2/\eta$. This phenomenon has been called optimization in
the Edge of Stability regime \parencite{cohen2022,cohen2022b,ahn2022}. While 
trying to explain why Edge of Stability happens specifically when training neural networks,
\textcite[Takeaway 1--2]{ahn2022} propose a general mechanism for its occurrence.
We are interested in the first takeaway, which roughly states that under
suitable assumptions on the gradient map and on the critical points of the loss,
\emph{convergence to minima can only happen when initializing from a
set of measure zero}. It is not known if their assumptions hold in
practice; we prove these assumptions are true for linear networks.

In the case of linear neural networks with a quadratic loss function,
\textcite{bah2020} and \textcite{nguegnang2021} have performed a careful analysis of
both gradient flow and gradient descent and have found upper bounds on the
step-size that guarantee convergence of gradient descent without using the
Descent Lemma. However, as they already mention, the bounds are not tight, and
their experiments show that convergence also happens for step-sizes exceeding
their bounds. Besides the theoretical interest in finding a tight bound on the
step-size which ensures convergence, such a bound would show the limitations of
gradient descent and serve as a starting point for further analysis.

Some incentives to use large step-sizes, which have been empirically validated
\parencite{mohtashami2022,li2020}, are that gradient descent run with big step-sizes
converges faster and that solutions generalize better.

Motivated by the results mentioned above, in this paper we provide an answer to
the question posed above for the case of linear networks with a quadratic loss
function. More specifically, \textbf{we show that for all step-sizes bigger than
a constant $\eta_E > 0$, the set of initializations from which gradient descent
converges to a minimum has measure zero.} Additionally, in
Theorems~\ref{thm:analytic_non_quad_non-singular}
and~\ref{thm:measure_zero_non-linear} we provide general conditions on the loss
function which guarantee the same result, but which are easier to check for
in practice than those of \textcite[Assumption 1]{ahn2022}.

\subsection{Technical Contributions}
Motivated by the problem of understanding the dynamics of gradient descent with
a big step-size, the main contributions of this paper are as follows:
\begin{enumerate}
	\item We prove that for linear networks with quadratic loss, the minima are
		a smooth manifold (Proposition~\ref{thm:min_are_manifold}). This allows
		us to perform an analysis of the Hessian.

	\item Theorem~\ref{thm:gradient_descent_is_non-singular} proves that
		preimages of sets of measure zero under the gradient descent map $G:
		\theta \mapsto \theta - \eta \nabla L(\theta)$ are again sets of measure
		zero (such maps are called \emph{non-singular}) for losses associated to
		linear networks. 

	\item We study the dynamical stability of the fixed points under gradient
		descent iterates and prove that if the step-size is too large, then the
		set of initializations from which gradient descent converges to a
		critical point, \emph{including minima}, has measure zero
		(Theorem~\ref{thm:measure_of_convergence}). This means that for almost
		all initializations gradient descent will \emph{not} converge (see the
		example below and \figref{fig:2}). 

	\item We prove that, under suitable assumptions, the previous results hold
		for more general loss functions also
		(Theorems~\ref{thm:analytic_non_quad_non-singular} and
		\ref{thm:measure_zero_non-linear}). The result that the gradient descent
		map is non-singular is extended to non-linear networks with a bounded
		or analytic activation function; for other networks an easy to check
		criterion is given.
\end{enumerate}

\begin{example}
  \label{section:intuition}
  We illustrate the main results on a simple model
  which does \emph{not} have a Lispschitz continuous gradient: a two neuron linear
  network. With data $\left\{ 1 \mapsto 1 \right\}$, the loss and
  the eigenvalues of the Hessian matrix along the minima are:
  \begin{gather*}
	L(\theta_1,\theta_2)	= \frac{1}{2}(1-\theta_1\theta_2)^2, \\
	\lambda_{\text{max}}|_M(\theta_1,\theta_2) = \theta_1^2+\theta_2^2, \quad
	\lambda_{\text{min}}|_M(\theta_1,\theta_2) = 0.
	\label{eq:data_model}
  \end{gather*}

  The global minima are the hyperbolas $M = \left\{ \theta_1\theta_2 =1  \right\}$.
  $\lambda_{\text{max}}|_M$ is lower bounded by 2, thus $\eta_E = 1$
  (cf. Thm.~\ref{thm:measure_of_convergence}).
  For a fixed step-size $\eta > 0$, the set of weakly stable minima is $M_{WS} = \left\{
  \theta_1^2 + \theta_2^2 \leq 2/\eta \right\} \subset M$ (see also
  Definition~\ref{def:stable_manifold}). If $\eta<1$, then $M_{WS} \neq
  \varnothing$ and convergence is possible; for $\eta > 1$,
  convergence is impossible. See \figref{fig:example}.
\end{example}

\begin{figure}[ht]
	\centering
	\subfigure[{$R = [-0.2,2.5]\times[0.0,2.5].$}]{
		\input{hyperbolae}
\tikzset{cross/.style={thick, cross out, draw=black, minimum size=2*(#1-\pgflinewidth), inner sep=0pt, outer sep=0pt},
%default radius will be 1pt. 
cross/.default={1pt}}
%stable for \eta=0.4 is 0.457 and  2.189
\begin{tikzpicture}[scale=1.14]
	%grid
	\draw[step=1.0cm,gray!50,ultra thin] (-0.2, 0.0) grid (2.5, 2.5);
	%\draw[-] (0.0, 0) -- (4.0, 0);
	%\draw[-] (0, 0.0) -- (0, 4.0);
	\draw (-0.2, 0.0) rectangle (2.5,2.5);
	\path 	node at (-0.2+2.7/2, -0.2) {$x$}
	node at (-0.4, 2.5/2) {$y$};	

	%hyperbolae
	\poshyperbolae[ultra thick]{2.5}{2.5}
	%here comes the stable manifold
	\poshyperbolae[ultra thick,color=red]{2.189}{(1/0.457)}

	%here comes the sample trajectories
        \filldraw [black!20!yellow] (2.4,0.617) circle (1pt);
        \draw (1.899,0.527) node[cross=3pt,black!20!yellow]{};
        \filldraw [black!20!yellow] (0.7,2.429) circle (1pt);
        \draw (0.554, 1.805) node[cross=3pt,black!20!yellow]{};
        \filldraw [black!20!yellow] (1.6, 2.0) circle (1pt);
        \draw (0.957, 1.045) node[cross=3pt,black!20!yellow]{};
        \filldraw [black!20!yellow] (1.0, 0.01) circle (1pt);
        \draw (1.211, 0.826) node[cross=3pt,black!20!yellow]{};
        \filldraw [black!20!yellow] (0.01, 1.1) circle (1pt);
        \draw (0.791, 1.264) node[cross=3pt,black!20!yellow]{};
        \draw[color=blue] plot file {tr_cv_eos.tex};
	\draw[color=green!80!black] plot file {tr_cv_mon.tex};
\end{tikzpicture}
		\label{fig:3}
	}
	\subfigure[{$R = [0.7,1.2]\times[0.6,1.2].$}]{
		\input{hyperbolae}
\tikzset{cross/.style={thick,cross out, draw=black, minimum size=2*(#1-\pgflinewidth), inner sep=0pt, outer sep=0pt},
%default radius will be 1pt. 
cross/.default={1pt}}
\begin{tikzpicture}[scale=5.215]
	%grid
	\draw[step=0.2cm,gray!50,ultra thin] (0.7, 0.65) grid (1.25, 1.2);
	%\draw[-] (0.0, 0) -- (4.0, 0);
	%\draw[-] (0, 0.0) -- (0, 4.0);
	\draw (0.7, 0.65) rectangle (1.25,1.2);
	\path 	node at ({0.7+(1.25-0.7)/2}, 0.65-0.04) {$x$}
	node at (1.25+0.04, {0.65+(1.2-0.65)/2}) {$y$};	

	%hyperbolae
	\poshyperbolae[ultra thick]{1.25}{1.2}

	%here comes the sample trajectories
        \filldraw [black!20!yellow] (1.1, 0.809) circle (0.25pt);
        \draw (0.817, 0.812) node[cross=3pt,black!20!yellow]{};
        \draw[color=blue,thin] plot file {tr_es_per.tex};
\end{tikzpicture}
		\label{fig:4}
	}
	\caption{
		Dynamics of gradient descent in two regimes for $L(x,y) =
		\frac{1}{2}(1-xy)^2$: (a) Trajectories for $\eta = 0.4$. $M_{WS}$ is red
		and iterations converge to points in $M_{WS}$; (b) Trajectory for $\eta = 1.1$. $M_{WS}
		= \varnothing$ and trajectories no longer converge. In both figures
		$R\subset\R^2$ is the displayed region.
	}
	\label{fig:example}
\end{figure}

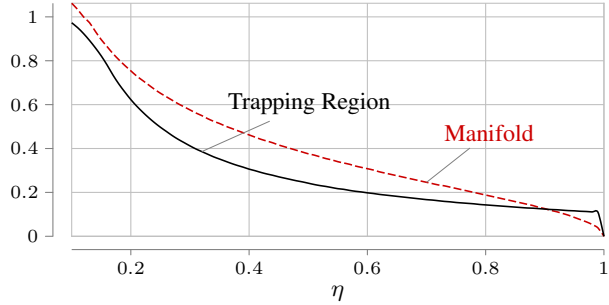
\begin{figure}[ht] 
    \centering
	\usetikzlibrary{datavisualization}

\begin{tikzpicture}[yscale=1,xscale=1.4]
	\datavisualization [
		scientific axes=clean, all axes={grid},
		x axis={label=$\eta$},
		visualize as smooth line/.list={trp, mfd}, 
		every data set label/.append style={text colored},
		trp={pin in data={text={Trapping Region}}},
		mfd={pin in data={text={Manifold}, when=x is 0.70}},
		style sheet = strong colors,
		style sheet = vary dashing]
		data [set=trp] {
			x,y
			0.10,0.973
			0.11,0.951
			0.12,0.924
			0.13,0.893
			0.14,0.859
			0.15,0.822
			0.16,0.780
			0.17,0.734
			0.18,0.693
			0.19,0.657
			0.20,0.623
			0.21,0.593
			0.22,0.566
			0.23,0.541
			0.24,0.518
			0.25,0.497
			0.26,0.478
			0.27,0.459
			0.28,0.442
			0.29,0.427
			0.30,0.412
			0.31,0.398
			0.32,0.386
			0.33,0.374
			0.34,0.362
			0.35,0.352
			0.36,0.342
			0.37,0.332
			0.38,0.323
			0.39,0.315
			0.40,0.306
			0.41,0.299
			0.42,0.291
			0.43,0.284
			0.44,0.278
			0.45,0.271
			0.46,0.265
			0.47,0.259
			0.48,0.253
			0.49,0.248
			0.50,0.242
			0.51,0.236
			0.52,0.231
			0.53,0.227
			0.54,0.222
			0.55,0.217
			0.56,0.214
			0.57,0.209
			0.58,0.205
			0.59,0.202
			0.60,0.198
			0.61,0.195
			0.62,0.191
			0.63,0.188
			0.64,0.185
			0.65,0.182
			0.66,0.179
			0.67,0.176
			0.68,0.173
			0.69,0.170
			0.70,0.167
			0.71,0.165
			0.72,0.162
			0.73,0.159
			0.74,0.157
			0.75,0.154
			0.76,0.152
			0.77,0.150
			0.78,0.148
			0.79,0.145
			0.80,0.143
			0.81,0.141
			0.82,0.139
			0.83,0.137
			0.84,0.135
			0.85,0.133
			0.86,0.131
			0.87,0.129
			0.88,0.128
			0.89,0.126
			0.90,0.124
			0.91,0.122
			0.92,0.121
			0.93,0.119
			0.94,0.117
			0.95,0.116
			0.96,0.114
			0.97,0.113
			0.98,0.111
			0.99,0.109
			1.00,0.000
		}
		data [set=mfd] {
			x,y
			0.1,1.062
			0.11,1.034
			0.12,1.0
			0.13,0.974
			0.14,0.934
			0.15,0.896
			0.16,0.864
			0.17,0.832
			0.18,0.804
			0.19,0.778
			0.2,0.754
			0.21,0.73
			0.22,0.71
			0.23,0.69
			0.24,0.67
			0.25,0.652
			0.26,0.636
			0.27,0.62
			0.28,0.604
			0.29,0.59
			0.3,0.576
			0.31,0.562
			0.32,0.55
			0.33,0.536
			0.34,0.524
			0.35,0.514
			0.36,0.502
			0.37,0.492
			0.38,0.482
			0.39,0.472
			0.4,0.462
			0.41,0.452
			0.42,0.442
			0.43,0.434
			0.44,0.426
			0.45,0.416
			0.46,0.408
			0.47,0.4
			0.48,0.392
			0.49,0.384
			0.5,0.376
			0.51,0.37
			0.52,0.362
			0.53,0.354
			0.54,0.348
			0.55,0.34
			0.56,0.334
			0.57,0.328
			0.58,0.32
			0.59,0.314
			0.6,0.308
			0.61,0.302
			0.62,0.294
			0.63,0.288
			0.64,0.282
			0.65,0.276
			0.66,0.27
			0.67,0.264
			0.68,0.258
			0.69,0.252
			0.7,0.246
			0.71,0.24
			0.72,0.234
			0.73,0.23
			0.74,0.224
			0.75,0.218
			0.76,0.212
			0.77,0.206
			0.78,0.2
			0.79,0.194
			0.8,0.188
			0.81,0.182
			0.82,0.176
			0.83,0.17
			0.84,0.164
			0.85,0.158
			0.86,0.152
			0.87,0.146
			0.88,0.14
			0.89,0.132
			0.9,0.126
			0.91,0.118
			0.92,0.112
			0.93,0.104
			0.94,0.096
			0.95,0.086
			0.96,0.076
			0.97,0.066
			0.98,0.054
			0.99,0.038
			1.0,0.0
		};
\end{tikzpicture}
	\caption{For the loss $\frac{1}{2}(1-xy)^2$, we plot the length of the manifold of weakly stable minima
	  $M_{WS}$ for $\eta \in [0.1, 1]$, normalised to the length
	  of $M_{WS}$ at $\eta=0.1$, which is $37.4$ (more details in the example above). As a
	  proxy for the size of $M_{WS}$ (since it can't be computed explicitly for
	  more complex losses), we also plot the ratio between the number of
	  initialisations that converge to a minimum (if this happens we say that
	  the initialisation lies in the \emph{trapping region} for that minimum) and the
	  total number of initialisations.
	  }
	\label{fig:2}
\end{figure}

\section{Related Work}
We present here a selection of the work relevant to our analysis. In the first part 
we present results about the landscape of linear networks and of wide
feed-forward networks with non-linear activation: understanding the landscape is
crucial for understanding the dynamics. In the second part 
existing results concerned with the general properties of optimization using
gradient descent on generic losses and more specific results for losses
determined by linear networks are presented. In the last part, we note some
works analyzing gradient descent as a dynamical system.

\subsection{Analyzing the Loss Landscape}

\textcite{baldi1989} have shown that for linear networks having only one hidden
layer (i.e. $\Phi(x) = W_2W_1x$)  all minima of $L$ are global minima. They
also conjectured that this result holds ``true in the case of linear networks
with several hidden layers'', which was proved only 20 years later in
\textcite{kawaguchi2016}. They further show that if a critical
point is not a minimum, then it must be a saddle point and also extend some of
their results to certain non-linear networks.

\textcite{trager2020} set up a geometric framework for analyzing the loss
landscape of neural networks. They look at the loss function as a composition of
two functions, $L = l \circ \mu$. The function $\mu$ is determined by the
architecture of the neural network and maps the space of parameters into the
space of functions that can be realized by the given architecture. $l$ is
usually a chosen convex functional, which also depends on the data. Because of
this representation of $L$, they distinguish between two types of the critical
points: \emph{pure} critical points which exist due to intrinsic the geometry and
non-convex nature of the functional space and the \emph{spurious} ones
otherwise. \textcite[Proposition 10]{trager2020} offers a new and
illuminating proof about the known results for the landscape of linear networks.
If $l$ is chosen to be the quadratic loss, particularly nice properties are
exhibited and these are also explained in their analysis.

A common hypothesis that is made when analyzing different optimization
algorithms along landscapes of neural networks is that the global minima $M$ of the
loss function determine a manifold \parencite[cf.]{li2022SGD, arora2022}.
\textcite{cooper2018} prove this assumption in the case of wide neural networks
with smooth rectified activation function. They show that $M$ is a smooth
manifold of dimension $(d_\theta-n)$, where $d_\theta$ and $n$ are the number of
parameters and data points, respectively. In practice, neural networks commonly
have many more parameters than data points, so their result strongly suggests
that in practical cases too $M$ is a very high-dimensional subset of
$\R^{d_\theta}$.

\subsection{Optimization using Gradient Descent}

\textcite{lee2019} study the convergence properties of gradient descent when the
step-size $\eta$ is kept constant and show, quite generally, that gradient
descent can converge to saddle points or maxima only when initialized from a set
of measure zero. This also answers a problem raised in \textcite{baldi1989}
regarding the ability of gradient descent and Newton's type methods to avoid
saddle points. \textcite{ahn2022}, like \textcite{lee2019} before them, use the Stable
Manifold Theorem to show that the same mechanism that explains why gradient
descent avoids maxima and saddles also makes gradient descent unable to converge
to minima which are not ``flat'' enough.  \textcite{chemnitz2024} provide a
non-trivial extension of the results of \textcite{ahn2022} to SGD. They introduce a
framework which allows one to differentiate between ``flat'' and ``curved''
minima in a stochastic setting as well. A new problem arises, namely that of finding and
characterizing the ``curved'' minima for losses of neural networks.
\textbf{We analyze the asymptotic behavior of the curvature along the minima for
linear networks with squared loss.}

\textcite{bah2020} take a different approach and study the dynamics of gradient
flow on the functional space determined by the mapping $\mu$. They show that
gradient flow always converges to critical points. \textcite{nguegnang2021}
follow up with an analysis of gradient descent: they provide relations on the
step-size during iterations that guarantee convergence. Unlike previous
results on the convergence of gradient descent, their bounds do not require
the step-size to decreases to zero; in fact it can even be kept constant. However, as they
observe in \textcite[Section 5]{nguegnang2021} their results are not entirely
tight: convergence can happen even if the step-size is bigger than their bounds.
Open questions still remain: \emph{Why can gradient descent converge with a
large \mbox{step-size}? What is the behavior of gradient descent iterates in
this regime?} Regarding the first question \textbf{we show that when the
step-size is chosen to be bigger than a critical value, gradient descent
iterates are unable to converge}.

\subsection{Gradient Descent and Dynamical Systems}
A common way in which the theory of dynamical systems is used to analyze
numerical methods, gradient descent in particular, is the following: considering
the continuous time limit of gradient descent, one obtains an ODE, usually
called the \emph{gradient flow}: $\dot{\theta} = -\nabla L(\theta)$. Next,
properties of this continuous time dynamical system and of its solution are
analyzed. The articles of \textcite{bloch1990} and \textcite{brockett1991} and the
references therein are good sources. Properties of the continuous time dynamical
system can then be used to derive new numerical schemes. The books
\textcite{helmke1994, hairer2010} are good introductions and the article
\textcite{rosca2023} contains a list of references which show how such methods have
been applied in the context of deep learning.

%preliminaries
\section{Preliminaries}
In this section we set up the framework to present our result. We
start by formally defining the \emph{gradient descent map} for an arbitrary loss
function and some dynamical properties and quantities of interest (stability, limit
points); we also highlight the relationship between the local properties of the
loss and those of the dynamical system it determines. We end this section by
introducing the family of loss functions associated to a neural network. 

\subsection{Gradient Descent} \label{subsection:gradient_descent}
Given a $C^r, r\geq 1,$ loss function $L:\R^{d} \to \R$, the gradient
descent algorithm is used to find a (local) minimum of the loss. For a
fixed constant $\eta > 0$, and a point $\theta \in \R^{d}$, gradient
descent takes a step in the direction of steepest descent. This defines the
following $C^{r-1}$ discrete dynamical system on $\R^{d}$:
\begin{equation}
		G\colon \R^{d}  \to \R^{d}, \qquad 
		\theta  \mapsto \theta - \eta \nabla L(\theta).
	\label{eq:gd}
\end{equation}
\textcite{nguegnang2021} analyze the convergence of this dynamical system, and
provide a sufficient condition on the step-size $\eta$ given the initial point
that \emph{guarantees the convergence} of gradient descent (GD) iterates to a critical point of the
loss, however, as their own  experiments show, the bound is not tight, i.e.\
convergence can happen with larger step-sizes too. Our goal is to understand the
global dynamics of the GD map with an \emph{a priori} fixed learning rate
$\eta$, with an aim to understand the cases where it does \emph{not} converge.
We start by identifying the \emph{fixed points} (i.e. points $\overline{\theta}$ such that
$G(\overline{\theta})=\overline{\theta}$) of the GD map and determining whether they are
stable or unstable. To this end, we consider the notion of Lyapunov stability
\parencite{lyapunov}.
\begin{definition}[Lyapunov stability]
	For a continuous map $G\colon \R^{d} \to \R^{d}$, a fixed point $\overline{\theta}$
	is said to be (Lyapunov) \emph{stable} if, for any $\varepsilon > 0$, there exists
	a $\delta > 0$ such that, if $\norm{\theta-\overline{\theta}} < \delta$, then for
	every $n \in \mathbb{N}$, $\norm{G^n(\theta)-\overline{\theta}} < \varepsilon$. A
	fixed point that is not stable is called \emph{unstable}. Here, $G^n$ is $n$ times
	composition of $G$ with itself.  
	\label{def:stability}
\end{definition}
\begin{remark}[Stability of fixed points from linearization]
  \label{rem:stability_from_linearization}
  Throughout this paper by \emph{linearization} of a map $G$ at a point
  $\theta \in \R^d$ we mean its Jacobian matrix of partial derivatives:
  $DG(\theta) =
  [\partial G_i/\partial \theta_j]$.
  The linearization of $G$ at a fixed point $\overline{\theta} \in \R^d$ reveals
  information about the local behavior of the map. Intuitively, the eigenvalues of
  $DG(\overline{\theta})$ with moduli greater than one correspond to directions
  along which $G$ is expanding, the eigenvalues with moduli smaller than one
  correspond to directions along which $G$ is contracting, and the eigenvalues
  with moduli equal to one correspond to directions along which the norm remains
  constant. If all the eigenvalues of $DG(\overline{\theta})$ have moduli smaller
  than one, then $\overline{\theta}$ is stable; if there is at least one
  eigenvalue with modulus greater than one, then $\overline{\theta}$ is
  unstable; for a proof of this statement see
  \parencite[Theorem 15.16]{hale1991}. 
\end{remark}
\begin{definition}[Stable set]
  Let $G\colon \R^{d} \to \R^{d}$ be a continuous map and $\theta$ a point in
  $\R^{d}$. The \emph{stable set} of $\theta$, $W^s(G, \theta)$, consists of all
  points $\theta_0 \in \R^{d}$ such that they converge to $\theta$ forward in
  time, i.e.  there exists a sequence $(\theta_n) \subset \R^{d}$ with
  $\theta_{n+1}=G(\theta_{n})$ and $\lim_{n \to \infty} \theta_n = \theta$. When
  the map is clear from the context, we will suppress it from the notation,
  i.e.\ we write $ W^s(\theta) $ instead of $ W^s(G,\theta) $.
  \label{def:stable_set}
\end{definition}
In the context of the present work, a fixed point $\overline{\theta} \in \R^d$
of $G$ corresponds to a \emph{critical point} of $L$ (i.e. $\nabla
L(\overline{\theta})=0$), and the stable set of $\overline{\theta}$ corresponds
to the initializations which converge to $\overline{\theta}$ under iterates of
the GD map. We can also define the stable set for a set: if $U \subset \R^d$,
then $W^s(G,U) \coloneqq \cup_{\theta \in U} W^s(G,\theta)$.
\begin{definition}[Notation: $M, Crit(L), H_L$]
We let $M$ denote the set of global minima of the loss $L$ and $Crit(L)$ denote
the set of critical points of $L$. For any point $\theta \in
\R^{d}$, we denote the Hessian of the loss at that point as $H_L(\theta)$. 
\end{definition}
Note that $M \subseteq Crit(L)$. As an example, if $L(x,y) = (1-xy)^2$, then
$Crit(L) = \left\{ 0 \right\} \cup \left\{(x,y)\in \R^2|xy = 1\right\}$ and $M =
\{(x,y)\in\R^2|xy = 1\}$. We are interested in understanding how the
``size'' of $W^s(G, M)$ changes as we change $\eta$ for the loss functions
we define in the next section.
\begin{remark}[Hessian determines linearization $DG(\theta)$]
  \label{rem:eigv_and_hess}
  From the definition of $G$ we see that the linearization at any point $\theta
  \in M$ is $DG(\theta) = I - \eta H_L(\theta)$. This relation, coupled with
  Remark~\ref{rem:stability_from_linearization}, justifies the
  importance of understanding the Hessian matrix at points in $M$. 
\end{remark}

\subsection{Loss Landscape of Neural Networks} \label{subsection:loss_landscape}
A fully connected feed-forward neural network with $h$ hidden layers and
activation function $\sigma\colon \R \to \R$ is a map $\Phi_\sigma\colon \R^{d_\theta}
\times \R^{d_0} \to \R^{d_h}$ of the form
  \begin{equation}
	  \Phi_{\sigma}(\theta, x) = \sigma(W_h \dots \sigma(W_1x+b_1) + \cdots + b_h), 
	  \label{eq:non-linear_nn}
  \end{equation}
where $\theta = (W_h,b_h,\dots,W_1,b_1) \in \R^{d_\theta}$ are the parameters,
$W_i \in \R^{d_i\times d_{i-1}}$ are matrices, and
$d_\theta= \sum_{i=0}^{h-1}(d_i+1)d_{i+1}$ is the total number of parameters; the
activation function $\sigma$ is applied element-wise. Given a network and a
dataset $\{(x_i \mapsto y_i)\}_{i=1}^n$ of $n$ points and labels, the goal of
supervised learning is to find a set of parameters $\theta^*$ such that
the output of the network defined by $\theta^*$ evaluated on any given
point will be close to the labels (even on points not in the dataset). To this
end, another function, call it $l$, is used as a measure of closeness between
the output of the network and the labels. In the next section, we will choose $\sigma$
to be the identity, set $b_i =0 $, and use the quadratic distance for $l$ (unless explicitly
mentioned otherwise). We call such networks \emph{linear} networks. This loss
can be written explicitly as
\begin{equation}
	L(\theta) = \sum_{i=1}^n \norm{y_i - \Phi(\theta, x_i)}^2 = 
	\sum_{i=1}^n \norm{y_i - W_h\dots W_1x_i}^2.
	\label{eq:loss_explicit}
\end{equation}
We write the loss as the composition $L = l \circ \mu_d$, where
\begin{equation}
	\begin{aligned}
		\mu_d \colon \R^{d_h\times d_{h-1}}\times\dots\times\R^{d_1\times d_0} &
		\rightarrow \R^{d_h\times d_0} \\
		(W_h, \dots, W_1) & \mapsto  W_h\dots W_1
	\end{aligned}
	\label{eq:loss}
\end{equation}
is the \emph{matrix multiplication map} and $l$ is a functional on the
space of $(d_h\times d_0)$-matrices (in our case the quadratic distance). The explicit form for the quadratic loss
with fixed data and label matrices, $X \in \R^{d_0 \times n}$ and $Y \in \R^{d_h
\times n}$, where $n$ is the number of data points, is
\begin{equation}
		l\colon \R^{d_h\times d_0} 	 \rightarrow \R, \qquad
		W							 \mapsto	 \norm{Y-WX}^2.
	\label{eq:ecld_loss}
\end{equation}
Under the assumption that the matrix $XX^T$ has full rank, there is a global
minimum $W^*_0 = YX^T(XX^T)^{-1/2}$, and it has $\rank
W^*_0 = \min\{d_h, d_0\}$. Throughout this paper we assume that $ XX^T $ has full
rank. To find the global minima for the whole
loss, $L$, we have to look at the preimage of $W^*_0$ under the matrix
multiplication map $\mu_d$. It can happen that $W^*_0$ does not lie in the
image of $\mu_d$. This is the case when one of the hidden layers of the network
is narrower than the input or the output dimension. Hence, \textcite{trager2020}
categorize linear network architectures into two types---filling and
non-filling.
\begin{definition}[Filling and non-filling architectures]
	Let $r = \min \{d_h, \dots, d_0\}$. If $r = \min \{d_h, d_0\}$, we say
	that the map $\mu_d$ is \emph{filling} since the image of $\mu_d$ is the whole
	space $\R^{d_h\times d_0}$. Otherwise, we say that $\mu_d$ is  
	\emph{non-filling}. We define $\mathcal{M}_r$ as the space of $(d_h
	\times d_0)$-matrices of rank at most $r$, which coincides with the image of $\mu_d.$
	\label{def:filling_non-filling}
\end{definition}
In the non-filling case $\mathcal{M}_r$ is no longer convex, however \textcite[Theorem
12]{trager2020} shows that the $l$ restricted to $\mathcal{M}_r$ still has a
\emph{unique} global minimum.
\begin{proposition}[\textcite{trager2020}]
	If the singular values of $W^*_0$ are pairwise distinct and
	positive, $l|_{\mathcal{M}_r}$ has a unique global minimum. In the filling
	case, the minimum of $l|_{\mathcal{M}_r}$ agrees with that of $ l $. In the
	non-filling case, the minima are different, however, if $ W^* $ is the
	minimum of $l|_{\mathcal{M}_r}$, we have that $ \rank W^* = \min\{d_h, \ldots, d_0\} $.
	\label{prop:minima_on_det_var}
\end{proposition}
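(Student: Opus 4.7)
The plan is to reduce the constrained minimization of $l|_{\mathcal{M}_r}$ to a classical low-rank matrix approximation problem and then invoke the Eckart--Young--Mirsky theorem. The two hypotheses on $W_0^*$ will play distinct roles: pairwise distinct singular values will deliver uniqueness of the minimizer (via a strict singular-value gap at position $r$), while positivity will force the minimizer to have rank exactly $r$ in the non-filling case.

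First I would complete the square. With $S = XX^T$ (invertible by assumption) and $W_0^* = YX^T S^{-1/2}$, expanding $\|Y - WX\|_F^2$ and using the identity $YX^T = W_0^* S^{1/2}$ gives
\begin{equation*}
l(W) = \|Y\|_F^2 - \|W_0^*\|_F^2 + \|W S^{1/2} - W_0^*\|_F^2.
\end{equation*}
Set $\hat W \coloneqq W S^{1/2}$. Since $S^{1/2}$ is invertible, right multiplication by it preserves rank, so $W \in \mathcal{M}_r$ if and only if $\hat W \in \mathcal{M}_r$. Minimizing $l$ on $\mathcal{M}_r$ is therefore equivalent to finding the closest rank-$\leq r$ matrix to $W_0^*$ in Frobenius norm. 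In the filling case, $\mathcal{M}_r$ is the whole ambient space $\R^{d_h \times d_0}$, so the trivially unique minimizer is $\hat W = W_0^*$, which pulls back to the unconstrained minimum of $l$; this disposes of the agreement claim.

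In the non-filling case ($r < \min(d_h,d_0)$), Eckart--Young--Mirsky identifies the minimizer of $\|\hat W - W_0^*\|_F^2$ over $\mathcal{M}_r$ with the truncated SVD $\sum_{i=1}^r \sigma_i(W_0^*)\, u_i v_i^T$, and uniqueness holds precisely when $\sigma_r(W_0^*) > \sigma_{r+1}(W_0^*)$, which is guaranteed by pairwise distinctness. Positivity of all singular values forces $\sigma_r(W_0^*) > 0$, so $\rank W^*$ is exactly $r = \min\{d_h,\ldots,d_0\}$, as claimed. The main obstacle, such as it is, is conceptual rather than technical: one must notice that the $XX^T$-weighting embedded in $l$ can be absorbed by a whitening substitution that preserves the rank constraint, thereby reducing a non-convex weighted least-squares problem on the determinantal variety $\mathcal{M}_r$ to the classical Eckart--Young setting. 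Once that move is made, both hypotheses play entirely transparent roles and no further technicalities arise.
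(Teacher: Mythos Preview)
Your argument is correct. The paper does not supply its own proof of this proposition; it is quoted from \textcite{trager2020} (their Theorem~12) and used as a black box, so there is nothing in the present paper to compare against. Your reduction via the whitening substitution $\hat W = W S^{1/2}$ to the classical Eckart--Young--Mirsky problem is the standard route and recovers all three claims cleanly: uniqueness from the strict gap $\sigma_r(W_0^*) > \sigma_{r+1}(W_0^*)$, agreement with the unconstrained minimizer in the filling case since $\mathcal{M}_r = \R^{d_h\times d_0}$, and exact rank $r$ in the non-filling case from $\sigma_r(W_0^*) > 0$. One small point you leave implicit but which is immediate from your rank computation: in the non-filling case the constrained and unconstrained minima differ because the former has rank $r$ while the latter has rank $\min(d_h,d_0) > r$.
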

\begin{remark}[Our analysis is based on the framework from \textcite{trager2020}]
	The set $\mathcal{M}_r$ is known as a \emph{determinantal variety} and has been thoroughly
	studied in algebraic geometry; its dimension is
	$r(d_h + d_0 - r)$. \textcite{trager2020} introduce this
	framework for analyzing general losses and prove results relating to the
	geometry of the losses determined by linear networks.
	\label{remark:trager}
\end{remark}

%main results
\section{Main Results: Linear Networks}
We begin this section by investigating the geometry of $M$ and the Hessian of
$L$ at points in $M$. We proceed to explain the asymptotic behaviour of the
eigenvalues of the Hessian along $M$. Using the relation between the Hessian of
the loss and the Jacobian of the gradient descent map, we show that the gradient
descent map is non-singular, i.e. its inverse image preserves null-sets. In the
last part we present a stability analysis of the minima together with results
about the convergence of gradient descent for large step-sizes.

\subsection{Geometry of the minima and the spectrum of $H_L$} \label{subsection:minima_are_manifold}
We first present the main result about the global minima of $L$. While the
results are intuitively simple, we are unaware of such a characterization in the
neural network literature.
\begin{proposition}[Geometry of $M$]
  \label{thm:min_are_manifold}
  The set $M$ of global minima is a $(d_\theta - r(d_0+d_h-r))$-dimensional smooth manifold.
\end{proposition}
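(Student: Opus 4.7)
The plan is to realize $M$ as the preimage of a regular value of the matrix multiplication map $\mu_d$ and apply the regular value (preimage) theorem. By Proposition~\ref{prop:minima_on_det_var}, the unique minimum $W^*$ of $l|_{\mathcal{M}_r}$ satisfies $\rank W^* = r$, so $M = \mu_d^{-1}(W^*)$. The set $\mathcal{M}_r^{\circ} := \{W : \rank W = r\}$ of rank-exactly-$r$ matrices is a classical smooth submanifold of $\R^{d_h \times d_0}$ of dimension $r(d_h + d_0 - r)$, and it contains $W^*$. Restricting $\mu_d$ to the open set $\mathcal{U} := \mu_d^{-1}(\mathcal{M}_r^{\circ})$ yields a smooth map $\mu_d|_{\mathcal{U}} \colon \mathcal{U} \to \mathcal{M}_r^{\circ}$ between smooth manifolds whose fibre over $W^*$ is exactly $M$.

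The core step is to verify that $W^*$ is a regular value, i.e.\ that $D\mu_d(\theta)$ surjects onto $T_{W^*}\mathcal{M}_r^{\circ}$ for every $\theta \in M$. Writing $A_{i:j} := W_i W_{i-1}\cdots W_j$, with the conventions $A_{h:h+1} = I_{d_h}$ and $A_{0:1} = I_{d_0}$, a direct computation gives
\[
  D\mu_d(\theta)(\delta_h,\ldots,\delta_1) \;=\; \sum_{i=1}^{h} A_{h:i+1}\,\delta_i\,A_{i-1:1}.
\]
Perturbing only $\delta_h$ shows that $\mathrm{im}\,D\mu_d(\theta)$ contains every matrix of the form $\delta_h A_{h-1:1}$, i.e.\ every matrix whose row space lies in that of $A_{h-1:1}$; symmetrically, perturbing only $\delta_1$ gives every matrix whose column space lies in that of $A_{h:2}$. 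The factorisations $W^* = W_h A_{h-1:1} = A_{h:2} W_1$ imply $\mathrm{row}(W^*) \subseteq \mathrm{row}(A_{h-1:1})$ and $\mathrm{col}(W^*) \subseteq \mathrm{col}(A_{h:2})$. Combined with the classical description of $T_{W^*}\mathcal{M}_r^{\circ}$ as the sum of the subspace of matrices with row space contained in $\mathrm{row}(W^*)$ and the subspace with column space contained in $\mathrm{col}(W^*)$, one concludes $\mathrm{im}\,D\mu_d(\theta) \supseteq T_{W^*}\mathcal{M}_r^{\circ}$. The reverse inclusion is automatic since $\mu_d|_{\mathcal{U}}$ takes values in $\mathcal{M}_r^{\circ}$, so $D\mu_d(\theta)$ is surjective.

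The regular value theorem then yields that $M$ is a smooth submanifold of $\R^{d_\theta}$ of dimension $d_\theta - \dim \mathcal{M}_r^{\circ} = d_\theta - r(d_h + d_0 - r)$. The main technical obstacle is exactly the surjectivity of $D\mu_d(\theta)$: it relies on observing that the constraint $\rank W^* = r$ forces the intermediate products $A_{h-1:1}$ and $A_{h:2}$ to inherit the row and column structure of $W^*$, so that varying only the outermost weight matrices $W_h$ and $W_1$ is already enough to sweep out all of $T_{W^*}\mathcal{M}_r^{\circ}$; the inner layers are then free and contribute the fibre's dimension.
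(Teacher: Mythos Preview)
Your proof is correct and follows essentially the same strategy as the paper: identify $M$ as $\mu_d^{-1}(W^*)$ and apply the regular value theorem. The only difference is that where the paper invokes Proposition~\ref{prop:matrix_mul_map} as a black box for the maximality of $\rank D\mu_d(\theta)$ on $M$, you supply a direct, self-contained argument for surjectivity onto $T_{W^*}\mathcal{M}_r^{\circ}$, and you are more explicit about restricting the codomain to the smooth stratum $\mathcal{M}_r^{\circ}$, which is the technically correct target in the non-filling case.
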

Before proving the proposition, we need another result from \textcite{trager2020}.
\begin{proposition}[{\textcite[Theorem 4]{trager2020}}]
	Let $r = \min \{d_h, \ldots, d_0\}$, $ W $ a matrix in the image of $
	\mu_d $,  $W = \mu_d(\theta)$, and $ \theta = (W_1, \ldots, W_h) $.
	\begin{itemize}
		\item (Filling case) If $r = \min \{d_h, d_0\}$, the differential
			$D\mu_d(\theta)$ has maximal rank equal to $\dim \mathcal{M}_r = d_0d_h$ if 
			and only if, for every $i \in \{1, 2, \dots, h-1\}$, either
			$\rank(W_{>i}) = d_o$ or $\rank(W_{<i+1}) = d_i$ holds.
		\item (Non-filling case) If $ r < \min \{d_h, d_0\}$, the differential $ D\mu_d(\theta)$ 
			has maximal rank equal to $ \dim \mathcal{M}_r = r(d_0+d_h-r)$  if
			and only if $ \rank(W^*)=r.$
	\end{itemize}
	\label{prop:matrix_mul_map}
\end{proposition}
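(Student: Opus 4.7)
The plan is to compute $D\mu_d(\theta)$ by the product rule, describe its image as a sum of ``matrix--sandwich'' subspaces, and then analyse when this sum attains the maximal dimension $\dim\mathcal{M}_r$. Differentiating $W_h W_{h-1}\cdots W_1$ gives
\begin{equation*}
  D\mu_d(\theta)(V_1,\ldots,V_h) \;=\; \sum_{i=1}^h W_{>i}\, V_i\, W_{<i},
\end{equation*}
with $W_{>i}=W_h\cdots W_{i+1}$ and $W_{<i}=W_{i-1}\cdots W_1$ (empty products being identities), so the image is $\sum_{i=1}^h I_i$ with $I_i := W_{>i}\cdot \R^{d_i\times d_{i-1}}\cdot W_{<i}$. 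Writing the sandwich map $V\mapsto W_{>i} V W_{<i}$ in SVD-aligned bases of $W_{>i}$ and $W_{<i}$, one sees that $I_i$ equals the set of matrices $M\in \R^{d_h\times d_0}$ whose column space lies in $\mathrm{col}(W_{>i})$ and whose row space lies in $\mathrm{row}(W_{<i})$, so $\dim I_i = \rank(W_{>i})\cdot\rank(W_{<i})$. Since $\mathrm{Im}(\mu_d)\subseteq\mathcal{M}_r$, we automatically obtain $\mathrm{Im}(D\mu_d(\theta))\subseteq T_W\mathcal{M}_r$, a space of dimension $r(d_0+d_h-r)$ at any smooth point of $\mathcal{M}_r$.

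For the non-filling case, the goal is to show that surjectivity of $D\mu_d(\theta)$ onto $T_{W^*}\mathcal{M}_r$ is equivalent to $\rank(W^*)=r$. The latter is precisely the condition for $W^*$ to be a smooth point of $\mathcal{M}_r$; if $\rank(W^*)<r$ a simple rank-bound shows every $I_i$ is strictly too small and the sum cannot reach dimension $r(d_0+d_h-r)$. Conversely, when $\rank(W^*)=r$, the factorisation $W^*=W_{>i}W_{<i+1}$ together with rank sub-multiplicativity forces $\rank(W_{>i})\ge r$ and $\rank(W_{<i+1})\ge r$; using the SVD $W^*=U\Sigma V^T$ and the standard description $T_{W^*}\mathcal{M}_r=U\R^{r\times d_0}+\R^{d_h\times r}V^T$, one verifies directly that the two sandwiches $I_1$ and $I_h$ already produce both summands, yielding surjectivity.

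For the filling case, where $\mathcal{M}_r=\R^{d_h\times d_0}$ and the target dimension is $d_h d_0$, I would proceed by induction on $h$ using the gap condition. At each gap $i\in\{1,\ldots,h-1\}$, the hypothesis states that either $\rank(W_{>i})$ or $\rank(W_{<i+1})$ is maximal, which in sandwich terms means that either the left factor governing $I_i$ is surjective on the column side, or the right factor governing $I_{i+1}$ is surjective on the row side; either alternative allows one to merge the adjacent sandwiches across gap $i$ and reduce to a chain of $h-1$ matrices. For necessity, if the condition fails at some gap $i$, adopting bases aligned with $\mathrm{col}(W_{>i})$ and $\mathrm{row}(W_{<i+1})$ exhibits a concrete block ``corner'' of $\R^{d_h\times d_0}$ lying outside $\sum_j I_j$, witnessing non-surjectivity.

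The main obstacle is the necessity direction in the filling case. Beyond the pointwise corner argument at the offending gap $i$, one must verify that no other sandwich $I_j$ with $j\ne i,i+1$ can fill the missing direction, which requires carefully tracking how $\mathrm{col}(W_{>j})$ and $\mathrm{row}(W_{<j})$ relate to the chosen corner through the full chain of partial products. This combinatorial bookkeeping across all $h$ layers is the technical heart of the proposition.
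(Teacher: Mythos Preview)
The paper does not contain its own proof of this proposition: it is quoted verbatim as \textcite[Theorem~4]{trager2020} and used as a black box in the proof of Proposition~\ref{thm:min_are_manifold}. There is therefore nothing in the present paper to compare your argument against.

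On the substance of your sketch: the product-rule formula for $D\mu_d$, the identification of each summand $I_i$ with the ``sandwich'' subspace of matrices having column space in $\mathrm{col}(W_{>i})$ and row space in $\mathrm{row}(W_{<i})$, and the tangent-space description of $\mathcal{M}_r$ at a smooth point are all the right ingredients, and they are essentially the ones used in \textcite{trager2020}. Two remarks. First, note that the paper's statement has $W^*$ in the non-filling clause, but the result you are trying to prove should really read $\rank(W)=r$ for the point $W=\mu_d(\theta)$ under consideration; the paper only ever applies the proposition at $\theta\in M$, where $W=W^*$, so the slip is harmless there but would mislead a general proof. Second, your own diagnosis of the difficulty is accurate: the necessity direction in the filling case is the nontrivial part, and the ``corner'' obstruction you describe at a failing gap $i$ must be shown to survive the contributions of \emph{all} $I_j$, not just $I_i$ and $I_{i+1}$. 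The clean way to do this (and the way Trager et al.\ proceed) is to observe that for $j\le i$ one has $\mathrm{col}(W_{>j})\subseteq\mathrm{col}(W_{>i})$, while for $j\ge i+1$ one has $\mathrm{row}(W_{<j})\subseteq\mathrm{row}(W_{<i+1})$; hence every $I_j$ lands in the union of the column-restricted and row-restricted subspaces determined by the gap, and a dimension count shows this union is strictly smaller than $\R^{d_h\times d_0}$ precisely when both ranks at gap $i$ are deficient. With that monotonicity observation the ``combinatorial bookkeeping'' you flag collapses to a single line.
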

\begin{proof}{\emph{of Proposition~\ref{thm:min_are_manifold}.}}
	Proposition \ref{prop:minima_on_det_var} says that in both cases, filling
	and non-filling, $l$ has a unique minimum. $M$ is the preimage of this
	minimum under the matrix multiplication map $\mu_d$.  In both the filling
	and non-filling case, Proposition~\ref{prop:matrix_mul_map} states that $
	D\mu(\theta) $ has maximal rank for all $ \theta \in M $. Hence, $\mu_d$ is
	a submersion at each $\theta \in M$, meaning that $M$ is a regular level set
	of $\mu_d$.  We apply the Regular Level Set Theorem \parencite[Corollary
	5.14]{lee2012} to finish the proof.
\end{proof}

\textcite{kawaguchi2016} has investigated the structure of the set of minima for
linear networks under the quadratic loss. They show that all minima are global
minima. However, they do not prove that the minima define a manifold, and we
are not aware of any similar result in the case of linear networks. The fact
that $M$ is a manifold provides a nice way to describe the spectrum of
$H_L(\theta)$ for $\theta \in M$. 

\begin{proposition}[Geometry of $L$ near $M$]
  \label{thm:curvature_near_manifold}
  The Hessian of $L$ at a point $\theta \in M$ has ($d_\theta - \dim M$) positive
  eigenvalues, $\dim M$ zero eigenvalues, and no negative eigenvalues.
\end{proposition}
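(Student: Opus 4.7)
The plan is to decompose the statement into three claims and establish each using the structure $L = l\circ\mu_d$ together with Proposition~\ref{thm:min_are_manifold}. The claims are: (a) no negative eigenvalues, (b) at least $\dim M$ zero eigenvalues, (c) exactly $d_\theta - \dim M$ positive eigenvalues. Since $H_L(\theta)$ is symmetric, showing $H_L(\theta)\succeq 0$ and $\ker H_L(\theta) = T_\theta M$ (of dimension $\dim M$) proves all three simultaneously.

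For (a), every $\theta\in M$ is a global minimum, so by second-order optimality $H_L(\theta)\succeq 0$. For (b), I would use that $L$ is constant on $M$: for any smooth curve $\gamma\colon(-\varepsilon,\varepsilon)\to M$ with $\gamma(0)=\theta$ and $\dot\gamma(0)=v\in T_\theta M$,
\[
0 = \frac{d^2}{dt^2}\bigg|_{t=0} L(\gamma(t)) = v^T H_L(\theta)\, v + \nabla L(\theta)\cdot\ddot\gamma(0) = v^T H_L(\theta)\, v,
\]
since $\nabla L(\theta)=0$. Combined with $H_L(\theta)\succeq 0$, this forces $H_L(\theta)v=0$, so $T_\theta M\subseteq\ker H_L(\theta)$. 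By Proposition~\ref{thm:min_are_manifold} this is a subspace of dimension $\dim M = d_\theta - r(d_0+d_h-r)$.

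The substantive step is the reverse inclusion $\ker H_L(\theta)\subseteq T_\theta M$. Using the chain rule for second derivatives applied to $L = l\circ\mu_d$, and writing $W^* = \mu_d(\theta)$,
\[
v^T H_L(\theta)\, v = (D\mu_d(\theta)v)^T H_l(W^*)(D\mu_d(\theta)v) + \nabla l(W^*)\cdot D^2\mu_d(\theta)(v,v).
\]
Since $l(W)=\|Y-WX\|^2$, one computes $H_l(W^*)(V,V) = 2\|VX\|_F^2$, which is positive definite because $XX^T$ has full rank (assumed throughout). In the filling case, $W^*$ is an unconstrained minimum of $l$, so $\nabla l(W^*)=0$ and only the quadratic-form term survives; since $T_\theta M = \ker D\mu_d(\theta)$ (from the submersion argument in Proposition~\ref{thm:min_are_manifold}), any $v\notin T_\theta M$ gives $D\mu_d(\theta)v\neq 0$ and hence $v^T H_L(\theta)v > 0$, finishing the filling case.

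The main obstacle is the non-filling case, where $\nabla l(W^*)\neq 0$ (only its tangential component to $\mathcal{M}_r$ vanishes, as $W^*$ is merely a critical point of $l|_{\mathcal{M}_r}$). The cross-term $\nabla l(W^*)\cdot D^2\mu_d(\theta)(v,v)$ couples $\nabla l(W^*)$ with the normal component of $D^2\mu_d(\theta)(v,v)$, which is the second fundamental form of $\mathcal{M}_r\hookrightarrow\R^{d_h\times d_0}$ applied to $(D\mu_d(\theta)v,D\mu_d(\theta)v)$ — a priori of uncontrolled sign. I would attack this by exploiting the explicit iterated-matrix-product form of $\mu_d$ to expand $D^2\mu_d(\theta)(v,v)$ componentwise, then use Proposition~\ref{prop:minima_on_det_var} (uniqueness and $\rank W^* = r$) to show that the normal component of the curvature correction, when paired against $\nabla l(W^*) = -2(Y-W^*X)X^T$, cannot fully cancel the strictly positive Gauss-Newton-type term $2\|D\mu_d(\theta)v\cdot X\|_F^2$ on $(T_\theta M)^\perp$. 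Together with $H_L(\theta)\succeq 0$, this establishes $\ker H_L(\theta) = T_\theta M$ and hence the stated eigenvalue count.
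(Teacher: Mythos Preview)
Your decomposition and the use of the chain-rule identity $H_L = D\mu_d^T H_l\, D\mu_d + \nabla l\cdot D^2\mu_d$ are exactly the paper's approach; your curve argument for $T_\theta M\subseteq\ker H_L(\theta)$ is a cosmetic variant of the paper's slice-chart argument. In the filling case your reasoning and the paper's coincide: $D\mu_d(\theta)$ surjects onto $\R^{d_h\times d_0}$, so $\nabla L(\theta)=0$ forces $\nabla l(W^*)=0$, the cross-term drops, and $\ker H_L(\theta)=\ker D\mu_d(\theta)=T_\theta M$ follows from positive definiteness of $H_l$.

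You are right to flag the non-filling case: $\nabla L(\theta)=D\mu_d(\theta)^T\nabla l(W^*)=0$ only gives $\nabla l(W^*)\perp\mathrm{Im}\,D\mu_d(\theta)=T_{W^*}\mathcal{M}_r$, not $\nabla l(W^*)=0$, so the cross-term need not vanish. The paper's one-line dismissal (``the gradient of $L$ vanishes, thus the second term vanishes'') is really only justified in the filling case, so here you are being more careful than the paper. However, your proposed fix---expand $D^2\mu_d$ in components and argue the curvature correction ``cannot fully cancel'' the Gauss--Newton part---is a plan, not a proof; nothing in what you wrote controls the sign or size of $\langle\nabla l(W^*),\,D^2\mu_d(\theta)(v,v)\rangle$. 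A cleaner reduction is to note that the normal component of $D^2\mu_d(\theta)(v,v)$ is the second fundamental form $\mathrm{II}_{W^*}(w,w)$ of $\mathcal{M}_r$ with $w\coloneqq D\mu_d(\theta)v$, so that $v^T H_L(\theta)v$ equals the \emph{intrinsic} Hessian of $l|_{\mathcal{M}_r}$ at $W^*$ applied to $w$. The missing ingredient is then that this intrinsic Hessian is positive definite on $T_{W^*}\mathcal{M}_r$---this is where the distinct-singular-values hypothesis in Proposition~\ref{prop:minima_on_det_var} must actually be used (nondegeneracy of the best rank-$r$ approximation), not merely its uniqueness. Until that definiteness is established, the non-filling case remains a genuine gap in your proposal.
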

\begin{proof}
	At every point $\theta \in \R^{d_\theta}$, $H_L(\theta)$ is a real symmetric matrix,
	thus has a basis of real eigenvectors. If $\theta \in M$, $H_L(\theta)$ has
	non-negative eigenvalues since $\theta$ is a minimum. The kernel of
	$H_L(\theta)$ is at least $\dim M$ dimensional since $M$ is a manifold. To
	see this, take any $\dim M$-slice chart $\varphi$ around $\theta$. In this
	chart, $L$ can be written as $$\hat{L}(\hat{\theta}) =
	L\circ\varphi^{-1}(\hat{\theta}_1,\cdots,\hat{\theta}_{\dim \mathcal{M}_r},0,\cdots,0),$$
	where $\hat{\theta} = (\hat{\theta}_1,\cdots,\hat{\theta}_{\dim
	\mathcal{M}_r},0,\cdots,0)$ is the coordinate representation of $\theta$ in the chart
	$\varphi$. Since $L$ is constant along $\dim M$ directions in coordinates, the
	dimension of the kernel of the Hessian of $L$ in coordinates is at least
	$\dim M$. We have that $$\hat{H_L} = D{\varphi^{-1}}^T H_L D\varphi^{-1}.$$ Since
	$\varphi$ is a diffeomorphism, its derivative is invertible, and it follows that
	the kernel of $H_L$ has dimension at least $\dim M$.

	At any point $ \theta \in \R^{d_\theta} $ the Hessian is:
	\begin{equation}
		H_L(\theta) = D\mu_d(\theta)^T D^2l(\mu_d(\theta)) D\mu_d(\theta) + Dl(\mu_d(\theta))
		D^2\mu_d(\theta).
		\label{eq:hessian}
	\end{equation}

	The gradient of $L$ vanishes at $\theta$, thus the second term in
	\eqref{eq:hessian} vanishes. $D\mu_d(\theta)$ and
	$D^2l(\mu_d(\theta))$ both have rank $\dim \mathcal{M}_r$ thus $H_L(\theta)$
	has rank at most $\dim \mathcal{M}_r$. It follows that the kernel of $H_L(\theta)$ has
	rank at most $d_\theta - \dim \mathcal{M}_r = \dim M$. The rank of the kernel of
	$H_L(\theta)$ is then $\dim M$ which is equal to the number of zero
	eigenvalues. This, together with the fact that $H_L(\theta)$ is positive
	semi-definite, shows there are $\dim \mathcal{M}_r$ positive eigenvalues.
\end{proof}
The previous theorem, coupled with the fact from Remark~\ref{rem:eigv_and_hess}
that the eigenvalues of the linearization of $G$ at a point $\theta$
are determined by the eigenvalues of the Hessian $H_L(\theta)$,
allows us to characterize the asymptotic behavior of points near $M$. A more
comprehensive treatment of stability is presented in the last part of this
section.
\begin{remark}[Geometry of $M$ for non-linear networks]
	\textcite{cooper2018} has investigated the structure of the global minimizers
	for overparameterized networks in the interpolating regime that use smooth rectified activation
	functions. They show that the minimizers form a sub-manifold of
	$\R^{d_\theta}$, where $d_\theta$ is the number of parameters, and that the
	Hessian of the loss at these points has only non-negative eigenvalues. We
	cannot directly use their results since feed-forward networks with identity
	activation function cannot operate in the interpolating regime,
	thus \textcite[Lemma 3.3]{cooper2018} does not hold; however, we derived analogous results for linear networks.
	\label{rem:cooper_manifolds}
\end{remark}

\subsection{Asymptotic behaviour of the Hessian on $M$}
\label{subsection:curvature_on_M}

The Hessian can be interpreted as describing the local curvature at minima and
the geometric picture one should have in mind is that the loss function becomes
increasingly curved, the curvature eventually tending to infinity, as one gets
further away from the origin of Euclidean space. The next theorem makes this
intuition precise.
\begin{theorem}[The eigenvalues of the Hessian are proper]
	Let $\lambda_{i}:M \rightarrow \R$ denote the function on $M$ that maps any
	$\theta \in M$ to $\lambda_i(H_L(\theta))$, the $i$-th non-zero eigenvalue
	of the Hessian of the loss function. Then $\lambda_{i}$ is a proper
	function, i.e. if $K \subset \R$ is any compact set, then
	$\lambda_i^{-1}(K)$ is also a compact set. In particular, for any $k>0$, the
	set $\{\theta \in M | \lambda_i(H_L(\theta)) \in [0,k]\}$ is compact. 
	\label{thm:eigenvalues_are_proper}
\end{theorem}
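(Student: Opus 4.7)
The plan is to reduce properness of $\lambda_i$ to a coercivity estimate on the Jacobian $J(\theta) := D\mu_d(\theta)$ along $M$. At critical points the second term of \eqref{eq:hessian} vanishes, so on $M$
\[
  H_L(\theta) \;=\; J(\theta)^T\, D^2l(W^*)\, J(\theta), \qquad W^* := \mu_d(\theta),
\]
which is constant on $M$. Since $XX^T$ has full rank, $D^2l(W^*)$ is positive definite with spectrum contained in a fixed interval $[c,C] \subset (0,\infty)$, and the Loewner sandwich $cJ^T J \preceq J^T D^2l(W^*) J \preceq CJ^T J$ gives $c\,\sigma_i(J(\theta))^2 \le \lambda_i(H_L(\theta)) \le C\,\sigma_i(J(\theta))^2$ for each non-zero index $i$. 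So it suffices to show that each non-zero squared singular value $\sigma_i(J(\theta))^2$ is a proper function on $M$.

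Next I would compute the Frobenius norm of $J$. Expanding $D\mu_d(\theta)(V_h,\dots,V_1) = \sum_{j=1}^h W_{>j} V_j W_{<j}$ with $W_{>j}:=W_h\cdots W_{j+1}$ and $W_{<j}:=W_{j-1}\cdots W_1$, a direct computation on the standard basis yields
\[
  \|J(\theta)\|_F^2 \;=\; \sum_{j=1}^{h} \|W_{>j}\|_F^2\,\|W_{<j}\|_F^2.
\]
By Proposition~\ref{prop:matrix_mul_map} the rank of $J(\theta)$ is constantly $\dim\mathcal{M}_r$ on $M$, so this sum equals $\sum_{i=1}^{\dim\mathcal{M}_r}\sigma_i(J(\theta))^2$; in particular $\sigma_{\max}(J(\theta))^2 \ge \|J(\theta)\|_F^2/\dim\mathcal{M}_r$, and properness of $\lambda_{\max}$ follows immediately once the Frobenius sum is shown to diverge.

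The hard part is that divergence: showing $\sum_j \|W_{>j}\|_F^2\|W_{<j}\|_F^2 \to \infty$ whenever $\|\theta\|\to\infty$ on $M$. I plan to argue by contradiction: if every partial product $\|W_{>j}\|_F$ and $\|W_{<j}\|_F$ remains bounded, then the constraint $W_h\cdots W_1 = W^*$ with $\operatorname{rank}(W^*)=r$, combined with singular-value submultiplicativity $\sigma_r(W^*) \le \sigma_r(W_{>j})\,\|W_j\|_{op}\,\sigma_r(W_{<j})$, forces a uniform positive lower bound on each $\sigma_r(W_j)$; together with the bounded partial products this yields a uniform upper bound on every $\|W_j\|_F$, contradicting $\|\theta\|\to\infty$. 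Lifting this conclusion from $\lambda_{\max}$ to every non-zero $\lambda_i$ is the genuinely delicate step: since $\operatorname{Im} J(\theta) = T_{W^*}\mathcal{M}_r$ is constant along $M$, one works inside this fixed ambient subspace and has to show that every singular value of $J(\theta)$ -- not only the largest -- is coercive, even though distinct singular values may in principle scale at different rates along different directions of escape in $M$.
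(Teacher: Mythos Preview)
Your reduction to the singular values of $J(\theta)=D\mu_d(\theta)$ via the Loewner sandwich is sound, and the Frobenius identity $\|J(\theta)\|_F^2=\sum_j\|W_{>j}\|_F^2\|W_{<j}\|_F^2$ is correct. But the proposal does not prove the theorem: you only argue coercivity of $\sigma_{\max}(J)$, hence of $\lambda_{\max}$, and then concede that passing to every non-zero $\lambda_i$ is ``genuinely delicate'' without supplying the argument. That is precisely the content of the theorem. The paper avoids this difficulty entirely by attacking $\lambda_{\min}$ instead of $\lambda_{\max}$: once the \emph{smallest} non-zero eigenvalue is shown to be proper, all the others follow for free from $\lambda_i\ge\lambda_{\min}$. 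Concretely, the paper extends $\lambda_{\min}$ off $M$ to a function $\tilde\lambda(\theta)=\lambda_{\min}(\theta)+\|\theta\|^{2h}\sum_i F_i(\theta)^2$ (with $F_i$ the defining equations of $M$), uses the homogeneity $D\mu_d(c\theta)=c^{h-1}D\mu_d(\theta)$ to pull out $\|\theta\|^{2(h-1)}$, and then invokes compactness of the unit sphere: at every unit vector either $\lambda_{\min}>0$ (if the point lies on $M$) or some $F_i\ne 0$, so $\tilde\lambda$ has a strictly positive minimum on $S^{d_\theta-1}$ and therefore blows up like $\|\theta\|^{2(h-1)}$.

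There is also a secondary gap in your contradiction argument for the divergence of $\|J\|_F^2$. Boundedness of the sum only gives that each product $\|W_{>j}\|_F\cdot\|W_{<j}\|_F$ is bounded, not that the individual factors are; one factor may blow up while the other collapses. The inequality you invoke, $\sigma_r(W^*)\le\sigma_r(W_{>j})\,\|W_j\|_{op}\,\sigma_r(W_{<j})$, is not a standard submultiplicativity bound (the usual Horn-type inequalities give at most one $\sigma_r$ on the right-hand side, not two), and even if one obtains a positive lower bound on $\sigma_r(W_j)$, that controls the \emph{bottom} of the spectrum of $W_j$ and does not by itself yield the upper bound on $\|W_j\|_F$ that you need. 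So both the main step and the auxiliary coercivity step are incomplete; the homogeneity-plus-sphere-compactness route in the appendix closes both at once.
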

The proof of this statement is mostly computational, so we have deffered it to
the appendix.
In other words, Theorem \ref{thm:eigenvalues_are_proper} states that the points
$\theta \in M$ where the $i$-th eigenvalue of the Hessian is bounded by a
positive constant $k>0$ form a bounded set in $M$ and thus in $\R^{d_\theta}$.
An immediate consequence is that the directional curvature of $L$ at all other points in $M$
must be bigger than $k$. The following corollary relies on this observation to
show that if the step-size $\eta$ is larger than a threshold value, then the
loss around every minimum has a high curvature along some direction.
\begin{corollary}[For large $\eta$, the loss has high curvature]
	For each $i$-th eigenvalue of $H_L$ not equal to zero, there exists a
	step-size $\eta_i > 0$ such that for all step-sizes $\eta > \eta_i$,
	$ \lambda_i $ is lower bounded by $ 2/\eta $, i.e.\ $\lambda_i^{-1}([0,2/\eta]) = \varnothing$.
	\label{cor:diff_stability}
\end{corollary}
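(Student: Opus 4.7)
The plan is to reduce the corollary to the assertion that $\inf_{\theta \in M} \lambda_i(\theta) > 0$, and then obtain this positive infimum directly from the properness result of Theorem~\ref{thm:eigenvalues_are_proper} combined with the fact (Proposition~\ref{thm:curvature_near_manifold}) that every nonzero eigenvalue of $H_L(\theta)$ at $\theta \in M$ is strictly positive.

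More concretely, first I would note that since $H_L$ depends smoothly on $\theta \in M$ and the nonzero eigenvalues are separated from the zero eigenvalues by Proposition~\ref{thm:curvature_near_manifold} (they have fixed multiplicity $\dim \mathcal{M}_r$), the functions $\lambda_i : M \to \R$ can be taken to be continuous and strictly positive. Let $m_i := \inf_{\theta \in M} \lambda_i(\theta) \geq 0$. The main step is to show $m_i > 0$. Suppose for contradiction that $m_i = 0$; then there is a sequence $\theta_n \in M$ with $\lambda_i(\theta_n) \to 0$. For $n$ large enough $\lambda_i(\theta_n) \leq 1$, so $\{\theta_n\}$ eventually lies in $\lambda_i^{-1}([0,1])$, which by Theorem~\ref{thm:eigenvalues_are_proper} is compact. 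Passing to a convergent subsequence $\theta_{n_k} \to \theta^* \in M$ (using also that $M$ is closed as the preimage of a minimum), continuity of $\lambda_i$ gives $\lambda_i(\theta^*) = 0$. But by Proposition~\ref{thm:curvature_near_manifold} every nonzero eigenvalue of $H_L(\theta^*)$ is strictly positive, contradicting $\lambda_i(\theta^*) = 0$. Hence $m_i > 0$.

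Finally, I set $\eta_i := 2/m_i$. Then for every $\eta > \eta_i$ we have $2/\eta < 2/\eta_i = m_i \leq \lambda_i(\theta)$ for all $\theta \in M$, so $\lambda_i^{-1}([0, 2/\eta]) = \varnothing$, which is exactly the claim.

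I do not anticipate a serious obstacle: the only nontrivial input is Theorem~\ref{thm:eigenvalues_are_proper}, which is assumed. The only mild care is in justifying that $\lambda_i$ is a well-defined continuous function on $M$ — this uses that the dimension of $\ker H_L(\theta)$ is constant along $M$ (by Proposition~\ref{thm:curvature_near_manifold}), so the positive eigenvalues can be continuously labeled and do not collide with zero.
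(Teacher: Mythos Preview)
Your proposal is correct and follows essentially the same approach as the paper: both arguments use the properness of $\lambda_i$ (Theorem~\ref{thm:eigenvalues_are_proper}) to obtain a compact sublevel set on which the continuous, strictly positive function $\lambda_i$ attains a positive global minimum, and then define $\eta_i$ in terms of that minimum. The only cosmetic difference is that the paper argues directly (minimize $\lambda_i$ over the compact set $\lambda_i^{-1}([0,\lambda_i(\theta)])$), whereas you phrase the same step as a contradiction via a minimizing sequence; your choice $\eta_i = 2/m_i$ is in fact the natural threshold.
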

\begin{proof}
	Take any $\theta \in M$. Since $\lambda_i$ is a proper map,
	the preimage of $I=[0,\lambda_i(\theta)]$, $\lambda_i^{-1}(I) \neq
	\varnothing$, is a compact
	set. A continuous function attains its minimum on a compact set, denote it
	by $\lambda_{i, \text{min}}$. This is, in fact, the global minimum of $\lambda_i$
	on $M$ since for any $\theta' \in M \setminus
	\lambda_i^{-1}(I)$ we have $\lambda_i(\theta') >
	\lambda_i(\theta)$. Let $\eta_i \coloneqq 1/\lambda_{i, \text{min}}$; then, for
	any $\eta > \eta_i$ we have that $\lambda_i^{-1}([0, 2/\eta]) \subset
	\lambda_i^{-1}([0, 2/\eta_i)) = \varnothing$.
\end{proof}
Using Corollary~\ref{cor:diff_stability} together
Remarks~\ref{rem:stability_from_linearization} and \ref{rem:eigv_and_hess}, and
Proposition~\ref{thm:curvature_near_manifold} we can describe the local behavior of
$G$ as follows: From Corollary~\ref{cor:diff_stability} we know there exists an
$\eta_E > 0$ such that for any $\eta > \eta_E$ and $\theta \in M$, any
non-zero eigenvalue of $H_L(\theta)$ is greater than $2/\eta$.
From Remark~\ref{rem:eigv_and_hess} we get that the eigenvalues of $DG(\theta)$
are either $1$ or have moduli greater than one. This means that all minima are
unstable. This is important since we will show that $W^s(M)$ has measure
zero in this case, i.e. convergence to any minima happens only from
initializations that lie in a set of measure zero.

\subsection{Gradient descent map is non-singular}
\label{subsection:gd_non-singular}

To prove the statement in the previous paragraph, we need a technical result
about the gradient descent map $G$: it preserves sets of measure zero under preimages. This
result is also interesting in itself since it is needed to prove statements in
the case of non-linear networks as well. In the following, by a null-set or a
measure-zero set, we mean a set having measure zero w.r.t.\ the Lebesgue measure
on $ \R^n $; we also say that a property holds almost everywhere if it does not
hold only on a measure zero set.
\begin{definition}[Non-singular maps]
	Let $f\colon \R^n \rightarrow \R^n$ be a smooth map. If $f^{-1}(B)$ is a
	null-set for any null-set $B$, we say that $f$ is a \emph{non-singular} map.
	\label{def:non-singular_map}
\end{definition}
We now prove a few lemmas which are used to show that all smooth maps 
from $\R^n$ to $\R^n$ that are invertible almost everywhere are non-singular.
\begin{lemma}[Images of measure zero sets have measure zero]
	Suppose $A \subset \R^n$ has measure zero and $f\colon \R^n \rightarrow \R^n$
	is a smooth map. Then $f(A)$ has measure zero.
	\label{lem:smooth_map_measure_zero}
\end{lemma}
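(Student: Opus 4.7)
The plan is to exploit the fact that a smooth map is locally Lipschitz, and that locally Lipschitz maps preserve null-sets under forward image. Globally, $f$ need not be Lipschitz, but a standard countable exhaustion of $\R^n$ by compact sets reduces the problem to the local situation.

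First I would write $\R^n = \bigcup_{k \in \mathbb{N}} B_k$ where $B_k$ is the closed ball of radius $k$ centered at the origin, and consider $A_k \coloneqq A \cap B_k$. Since each $A_k$ has measure zero (subset of a null-set), and $f(A) = \bigcup_k f(A_k)$, it suffices to show $f(A_k)$ has measure zero: a countable union of null-sets is a null-set. On the compact set $B_k$, the continuous map $Df$ is bounded, so by the mean value inequality $f$ restricted to $B_k$ is Lipschitz with some constant $L_k < \infty$.

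Next I would establish the key reduction: a Lipschitz map $g: \R^n \to \R^n$ with constant $L$ sends null-sets to null-sets. Given $\varepsilon > 0$, cover $A_k$ by countably many open balls $\{U_j\}$ of radii $r_j$ with $\sum_j r_j^n < \varepsilon$. Because $g$ is $L$-Lipschitz, $g(U_j)$ is contained in a ball of radius $Lr_j$, so $g(A_k)$ is covered by balls whose total volume is bounded by $L^n \omega_n \sum_j r_j^n < L^n \omega_n \varepsilon$, where $\omega_n$ is the volume of the unit ball. Since $\varepsilon$ was arbitrary, $g(A_k)$ has outer measure zero, hence measure zero. Applied with $g = f|_{B_k}$ and $L = L_k$, this gives the conclusion for each $A_k$, and taking the countable union finishes the proof.

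There is no substantial obstacle here; the only subtlety worth noting is that the argument does \emph{not} require $f$ to be globally Lipschitz (which a general smooth map is not), and it is precisely the compact exhaustion combined with countable additivity of the Lebesgue measure that bridges this gap. The dimensions of domain and codomain agree, which is used in estimating $\sum_j (Lr_j)^n$; the statement would fail if the codomain dimension were smaller, but that is not our setting.
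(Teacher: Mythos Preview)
Your argument is correct and is the standard proof of this classical fact. The paper does not actually prove the lemma; it simply cites \textcite[Proposition 6.5]{lee2012}, so there is no substantive comparison to make---you have written out essentially the proof one finds in that reference. One tiny imprecision: the covering balls $U_j$ need not lie inside $B_k$, so strictly speaking $g = f|_{B_k}$ is only defined on $U_j \cap B_k$; but since $B_k$ is convex, the Lipschitz estimate still bounds $\operatorname{diam} f(U_j \cap B_k) \le 2L_k r_j$, and the rest goes through unchanged.
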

\begin{proof}
	See \textcite[Proposition 6.5]{lee2012}.
\end{proof}
\begin{lemma}[Local diffeomorphisms are non-singular]
	Let $U \subset \R^n$ be an open set and $f \colon U \rightarrow \R^n$ be a
	smooth map such that for all $x \in U$, $\det(DF(x)) \neq 0$. Then
	$f^{-1}(B)$ has measure zero for any measure zero set $B \subset \R^n$.
	\label{lem:local_diffeomorphism_non-singular}
\end{lemma}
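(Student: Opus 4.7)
The plan is to reduce the global statement to the previous lemma by exploiting the fact that the nonvanishing of $\det(Df)$ forces $f$ to be a local diffeomorphism, and then to use the second countability of $\R^n$ to glue countably many local arguments together.

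First I would invoke the Inverse Function Theorem: since $\det(Df(x)) \neq 0$ for every $x \in U$, each $x$ has an open neighborhood $V_x \subset U$ such that $f|_{V_x}\colon V_x \to f(V_x)$ is a diffeomorphism onto an open set. Because $U$ is an open subset of $\R^n$ it is second countable (equivalently Lindel\"of), so from the cover $\{V_x\}_{x \in U}$ one can extract a countable subcover $\{V_i\}_{i \in \mathbb{N}}$ with $U = \bigcup_i V_i$ and each $f|_{V_i}$ a diffeomorphism onto the open set $f(V_i) \subset \R^n$.

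Next I would decompose the preimage accordingly. For any $B \subset \R^n$,
\begin{equation*}
  f^{-1}(B) \;=\; \bigcup_{i \in \mathbb{N}} \bigl( f^{-1}(B) \cap V_i \bigr)
  \;=\; \bigcup_{i \in \mathbb{N}} (f|_{V_i})^{-1}\bigl( B \cap f(V_i) \bigr).
\end{equation*}
If $B$ has Lebesgue measure zero, then so does each slice $B \cap f(V_i)$. Now $(f|_{V_i})^{-1}$ is a smooth map from the open set $f(V_i) \subset \R^n$ to $\R^n$, so Lemma~\ref{lem:smooth_map_measure_zero} (applied to the smooth inverse; the proof of that lemma is local and applies verbatim to smooth maps defined on open subsets of $\R^n$) gives that $(f|_{V_i})^{-1}(B \cap f(V_i))$ is a null set. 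A countable union of null sets is a null set, hence $f^{-1}(B)$ has measure zero.

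The only subtlety I anticipate is the technical point that Lemma~\ref{lem:smooth_map_measure_zero} was stated for smooth maps $\R^n \to \R^n$, whereas here the diffeomorphism $(f|_{V_i})^{-1}$ is only defined on the open set $f(V_i)$. This is not a real obstacle: the standard proof covers $f(V_i)$ by countably many closed balls on which the Jacobian is bounded, and one extends the conclusion by countable additivity. Alternatively, one could bypass it by arguing directly that a $C^1$ diffeomorphism between open subsets of $\R^n$ is locally Lipschitz and therefore carries null sets to null sets. Either way, the heart of the argument is the Inverse Function Theorem together with second countability.
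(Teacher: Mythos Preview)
Your proof is correct and follows essentially the same route as the paper: use the Inverse Function Theorem to get local diffeomorphisms, extract a countable subcover by second countability, and apply Lemma~\ref{lem:smooth_map_measure_zero} to each local inverse. The only cosmetic difference is that the paper covers $f^{-1}(B)$ rather than all of $U$, and it does not explicitly flag the domain issue for Lemma~\ref{lem:smooth_map_measure_zero} that you (correctly) note and resolve.
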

\begin{proof}
	The condition that $\det(DF(x)) \neq 0$ for any $x \in U$ together with the
	inverse function theorem \parencite[Theorem 4.5]{lee2012} tells us that $f$ is 
	a local diffeomorphism.

	Let $B \subset \R^n$ be any measure zero set. Because $f$ is a local
	diffeomorphism, around every point $x \in f^{-1}(B)$ there exists an open
	set $U_x \ni x$ such that $f$ restricted to it, $f|_{U_x}\colon U_x
	\rightarrow f(U_x)$, is a diffeomorphism. Since $f^{-1}(B) \subset \R^n$ and
	$\R^n$ is second countable, we can extract a countable subcover $\left\{
	U_{x_i} \right\}_{i=1}^\infty$ of $f^{-1}(B)$. Because $B$ has measure zero,
	so does each intersection $B_i \coloneqq B \cap f(U_{x_i})$. Since $f$
	restricted to $U_{x_i}$ is a diffeomorpishm, by Lemma 
	\ref{lem:smooth_map_measure_zero}, $f|_{U_i}^{-1}(B_i)$ has measure zero
	for all $i$. Observe that 
	$$f^{-1}(B) \subset \cup_{i=1}^\infty f|_{U_i}^{-1}(B_i).$$
	A countable union of measure zero sets has measure zero and a subset of a
	measure zero set has measure zero, thus the lemma is proved.
\end{proof}
\begin{proposition}[Maps invertible a.e. are non-singular]
	Let $f\colon \R^n \rightarrow \R^n$ be a smooth map such that the set of points
	where the determinant of its Jacobian matrix vanishes, $S(f)$, has measure
	zero. Then $f$ is non-singular.
	\label{prop:invertible_ae_is_non-singular}
\end{proposition}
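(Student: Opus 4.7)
The plan is to partition the domain into the bad set $S(f)$, where the Jacobian determinant vanishes, and its complement $U \coloneqq \R^n \setminus S(f)$, where $f$ behaves as a local diffeomorphism, and then bound the preimage of a null set $B$ on each piece separately. Since $\det Df$ is a continuous (in fact smooth) function of $x$, the set $S(f) = \{x : \det Df(x) = 0\}$ is closed in $\R^n$, so $U$ is open. This is the key observation that lets us invoke the previous lemma on $U$.

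First I would write $f^{-1}(B) = \bigl(f^{-1}(B) \cap S(f)\bigr) \cup \bigl(f^{-1}(B) \cap U\bigr)$ for an arbitrary measure-zero set $B \subset \R^n$. The first piece is contained in $S(f)$, which has measure zero by hypothesis, hence itself has measure zero. For the second piece, I would consider the restriction $g \coloneqq f|_U \colon U \to \R^n$. By construction $\det Dg(x) \neq 0$ for every $x \in U$, and $U$ is open, so Lemma~\ref{lem:local_diffeomorphism_non-singular} applies directly: $g^{-1}(B)$ has measure zero. Since $f^{-1}(B) \cap U = g^{-1}(B)$, the second piece also has measure zero.

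Finally, a finite (here, binary) union of null sets is null, so $f^{-1}(B)$ has measure zero, proving that $f$ is non-singular. The argument is essentially a two-line reduction to Lemma~\ref{lem:local_diffeomorphism_non-singular} once the decomposition is set up, and the only subtlety worth flagging is the openness of $U$, which is needed so that the lemma (stated for open domains) applies. There is no real obstacle here; the proposition is a clean bookkeeping statement that packages the local-diffeomorphism lemma together with the measure-zero hypothesis on the critical set of $f$ into a single global statement, and this packaging is what makes it convenient to apply to gradient descent maps $G = \mathrm{id} - \eta \nabla L$, for which one only needs to verify that the set of points where $\det(I - \eta H_L)$ vanishes is a null set.
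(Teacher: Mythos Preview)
Your proof is correct and follows essentially the same approach as the paper: decompose the domain into the closed null set $S(f)$ and its open complement $U$, bound the preimage on $S(f)$ trivially by inclusion, and apply Lemma~\ref{lem:local_diffeomorphism_non-singular} on $U$. The paper writes the decomposition as $f^{-1}(B) = f|_{S(f)}^{-1}(B) \cup f|_{U}^{-1}(B)$, which is identical in content to your $f^{-1}(B) = \bigl(f^{-1}(B)\cap S(f)\bigr)\cup\bigl(f^{-1}(B)\cap U\bigr)$.
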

\begin{proof}
	The determinant is a continuous map and a singleton is closed, thus
	$S(f)$ is closed. Then, the restriction of $f$ to the open set $U \coloneqq 
	\R^n\setminus S(f)$ is a local diffeomorphism.

	Let $B \subset \R^n$ be any measure zero set. Its preimage under $f$ can be
	written as
	$$f^{-1}(B) = f|_{S(f)}^{-1}(B) \cup f|_{U}^{-1}(B).$$
	The first term in the union is the intersection of a set with $S(f)$, which
	has measure zero by assumption, thus it has measure zero. The second term
	has measure zero by Lemma 
	\ref{lem:local_diffeomorphism_non-singular}. It follows that $f^{-1}(B)$
	has measure zero. 
\end{proof}
\begin{remark}[Proposition \ref{prop:invertible_ae_is_non-singular} can be generalized considerably] 
  \label{rem:extending}
  A smooth map between smooth manifolds $f\colon M \rightarrow
  N$ of dimensions $m$ an $n$, respectively, is non-singular if the set of
  critical points of $f$ has measure zero.
\end{remark}
Having found a simpler condition to check that a map is non-singular,
we show that the gradient descent map for linear neural networks is
invertible almost everywhere.
\begin{lemma}[The zero set of polynomials has measure zero]
	\label{lem:measure_zero_set_poly}
	Let $p\colon \R^n \rightarrow \R$ be a non-zero polynomial function.
	Denote by $Z(p)\coloneqq \left\{ x \in \R^n | p(x) = 0 \right\}$ the zero
	set of $p$. Then, $Z(p)$ has measure zero.
\end{lemma}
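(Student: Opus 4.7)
The plan is a straightforward induction on the number of variables $n$, using Fubini's theorem to reduce from dimension $n$ to dimension $n-1$.

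For the base case $n=1$, a non-zero polynomial $p \in \R[x]$ has at most $\deg p$ roots by the fundamental theorem of algebra, so $Z(p)$ is a finite set and hence has Lebesgue measure zero.

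For the inductive step, I would write $p$ as a polynomial in the last variable $x_n$ with coefficients in $\R[x_1, \ldots, x_{n-1}]$:
\begin{equation*}
p(x_1, \ldots, x_n) = \sum_{k=0}^{d} a_k(x_1, \ldots, x_{n-1}) \, x_n^k,
\end{equation*}
choosing $d$ to be the largest index for which $a_d$ is not identically zero. By the inductive hypothesis, the set $A \coloneqq Z(a_d) \subset \R^{n-1}$ has measure zero. For any fixed $(x_1, \ldots, x_{n-1}) \in \R^{n-1} \setminus A$, the univariate polynomial $p(x_1, \ldots, x_{n-1}, \cdot)$ has degree exactly $d$ and therefore at most $d$ real roots, so its zero set in $\R$ has one-dimensional measure zero.

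Applying Fubini's theorem to the indicator function of $Z(p)$ then gives
\begin{equation*}
\mathrm{vol}_n(Z(p)) = \int_{\R^{n-1}} \mathrm{vol}_1\bigl(\{x_n : p(x_1,\ldots,x_n)=0\}\bigr) \, dx_1\cdots dx_{n-1},
\end{equation*}
where the integrand vanishes on $\R^{n-1} \setminus A$ by the previous observation, and the domain $A$ of possibly nonzero integrand has $(n-1)$-dimensional measure zero. Hence the integral is zero, completing the induction. The only mild subtlety is verifying that $Z(p)$ is Lebesgue measurable so that Fubini applies; this is immediate since $p$ is continuous and $\{0\}$ is closed, making $Z(p)$ a closed (hence Borel) set. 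No step here is a real obstacle, which is presumably why the authors state it as a lemma to be cited rather than elaborated.
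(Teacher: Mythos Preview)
Your proof is correct and follows essentially the same route as the paper: induction on $n$, with the base case handled by finiteness of roots of a univariate polynomial, and the inductive step carried out by expanding in the last variable and invoking Fubini's theorem. The only cosmetic difference is that the paper splits $Z(p)$ according to whether \emph{all} coefficients $a_k$ vanish at a given $(x_1,\ldots,x_{n-1})$, whereas you split according to whether the \emph{leading} coefficient $a_d$ vanishes; your version is slightly cleaner, and you also make the measurability of $Z(p)$ explicit, which the paper leaves implicit.
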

\begin{proof}
	We proceed by induction on $n$. If $n = 1$, using the Fundamental Theorem of
	Algebra, there are finitely many roots for the polynomial $p$. Thus $Z(p)$
	has measure zero.

	Suppose the lemma is established for polynomials in $n-1$ variables.
	Let $p$ be a nontrivial polynomial in $n$ variables, say of degree
	$k \geq 1$ in $x_n$. We can then write

	\begin{equation*}
		p(x, x_n) = \sum_{j=0}^k p_j(x)x_n^j,
	\end{equation*}

	where $x = (x_1, \cdots, x_{n-1})$ and $p_0, \cdots, p_k$ are polynomials in
	$n-1$ variables, where at least $p_k$ is non-trivial. Now if
	$(x, x_n)$ is such that $p(x, x_n) = 0$ there are two possibilities:

	\begin{enumerate}
		\item $p_0(x) = \cdots = p_k(x) = 0$, or
		\item $x_n$ is a root of the (non-trivial) one-variable polynomial 
			$p_x(t) = \sum_{j=0}^k p_j(x)t^j$.
	\end{enumerate}

	Let $A, B$ be the subsets of $\R^n$ where these respective conditions hold,
	so that $Z(p) = A \cup B$. By the induction hypothesis $A$ has measure zero.
	By the Fundamental Theorem of Algebra, for each fixed $x$, there are
	finitely many $t$ such that $(x,t) \in B$. Using Fubini's Theorem
	\parencite[Theorem 3.4.4]{bogachev2007}, we
	conclude that $B$ has measure zero.
\end{proof}
\begin{lemma}[Gradient descent map for linear networks is invertible almose everywhere]
	\label{lem:gd_sing_measure_zero}
	Let $L$ be the loss function given by a linear network with quadratic loss
	function and $G$ the associated gradient descent map. For any $\eta > 0$ the
	set of points where the Jacobian matrix of $G$ is singular, $S(G) \coloneqq
	\left\{ \theta \in \R^{d_\theta} | \det(DG(\theta)) = 0 \right\}$, has measure
	zero.
\end{lemma}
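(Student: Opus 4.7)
The plan is to apply Lemma~\ref{lem:measure_zero_set_poly}. The key observation is that for a linear network with quadratic loss, $L(\theta) = \|Y - W_h\cdots W_1 X\|_F^2$ is a polynomial in the entries of $\theta=(W_1,\ldots,W_h)$. Consequently $\nabla L$ has polynomial components, $H_L(\theta)$ is a polynomial matrix, and
\[
  p(\theta) \;\coloneqq\; \det(DG(\theta)) \;=\; \det\bigl(I - \eta H_L(\theta)\bigr)
\]
is a polynomial function $p\colon \R^{d_\theta}\to\R$. Since $S(G) = Z(p)$, Lemma~\ref{lem:measure_zero_set_poly} reduces the problem to showing that $p$ is not identically zero, i.e.\ to exhibiting a single witness $\theta^\ast$ with $p(\theta^\ast)\neq 0$.

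For the witness I would use the properness established in Theorem~\ref{thm:eigenvalues_are_proper} together with the spectral description in Proposition~\ref{thm:curvature_near_manifold}. Fix $\eta>0$. For each non-zero eigenvalue function $\lambda_i$ on $M$, the sublevel set $\{\theta\in M : \lambda_i(\theta)\le 2/\eta\}$ is compact, so its finite union over $i$ is compact. Provided $M$ is non-compact, I pick $\theta^\ast\in M$ in the complement; then every non-zero eigenvalue of $H_L(\theta^\ast)$ strictly exceeds $1/\eta$, and the remaining $\dim M$ eigenvalues are zero by Proposition~\ref{thm:curvature_near_manifold}. Therefore
\[
  p(\theta^\ast) \;=\; \prod_{i}\bigl(1-\eta\lambda_i(\theta^\ast)\bigr) \;=\; 1^{\dim M}\cdot \prod_{\lambda_i>0}\bigl(1-\eta\lambda_i(\theta^\ast)\bigr),
\]
each remaining factor strictly negative, hence $p(\theta^\ast)\neq 0$.

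The main obstacle I anticipate is certifying that $M$ is non-compact, which is what lets Theorem~\ref{thm:eigenvalues_are_proper} deliver a $\theta^\ast$ at which all nonzero Hessian eigenvalues simultaneously exceed $1/\eta$. For depth $h\ge 2$ this is immediate from the scaling symmetry $(W_1,W_2,\ldots,W_h)\mapsto (tW_1,t^{-1}W_2,W_3,\ldots,W_h)$, which fixes the product $\mu_d(\theta)=W_h\cdots W_1$ and hence preserves $M$ while sending $\|\theta\|\to\infty$; in the non-filling case the same one-parameter subgroup acts on the fibre $\mu_d^{-1}(W^\ast)$. The degenerate single-layer case ($h=1$) is convex-quadratic with $H_L$ a constant matrix, and is handled separately (and in fact lies outside the regime of interest for the paper). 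Apart from this geometric input, the argument is otherwise routine: polynomial loss yields a polynomial Jacobian determinant, and a single point at infinity along $M$ serves as the witness of non-vanishing.
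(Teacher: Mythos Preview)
Your approach is essentially identical to the paper's: both observe that $p(\theta)=\det(I-\eta H_L(\theta))$ is a polynomial, reduce via Lemma~\ref{lem:measure_zero_set_poly} to exhibiting a single witness with $p(\theta^\ast)\neq 0$, and locate that witness on $M$ by invoking the properness of the nonzero Hessian eigenvalues (Theorem~\ref{thm:eigenvalues_are_proper}). You are more explicit than the paper about the geometric input needed---non-compactness of $M$ via the scaling symmetry for $h\ge 2$---which the paper's terse proof leaves implicit.
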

\begin{proof}
	The map $p: \theta \mapsto \det (I -\eta H_L(\theta))$ is a polynomial
	(since the entries of $H_L$ are polynomials), but it is not obvious why the
	entries in the matrix do not cancel out to the zero polynomial when taking
	the determinant. After diagonalizing $H_L$, for any $\theta \in M$,
	$p(\theta) = \Pi_{i=1}^{\dim M}(1-\eta\lambda_i(\theta))$. We can always
	find $\theta$ such that $p(\theta)\neq0$ using the fact that the non-zero
	eigenvalues are proper maps (Theorem \ref{thm:eigenvalues_are_proper}).
\end{proof}
\begin{theorem}
	The gradient descent map $G$ is non-singular.
	\label{thm:gradient_descent_is_non-singular}
\end{theorem}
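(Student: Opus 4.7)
The plan is to apply Proposition~\ref{prop:invertible_ae_is_non-singular} to $G$ in one step. The proposition needs two ingredients: that $G$ is smooth as a self-map of $\R^{d_\theta}$, and that the set $S(G)=\{\theta : \det DG(\theta)=0\}$ has Lebesgue measure zero. Both are already within reach once the preceding lemmas are in place.

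For smoothness, I would observe that for a linear network with quadratic loss, $L(\theta)=\sum_{i=1}^n \norm{y_i - W_h\cdots W_1 x_i}^2$ is polynomial in the entries of $\theta=(W_1,\ldots,W_h)$, so $\nabla L$ has polynomial components and $G(\theta)=\theta-\eta\nabla L(\theta)$ is itself a polynomial map of $\R^{d_\theta}$ into itself, in particular $C^\infty$. For the measure-zero condition on $S(G)$, Lemma~\ref{lem:gd_sing_measure_zero} gives exactly what is needed. Feeding these two facts into Proposition~\ref{prop:invertible_ae_is_non-singular} yields that $G^{-1}(B)$ has Lebesgue measure zero for every null-set $B\subset\R^{d_\theta}$, which is the definition of non-singularity.

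I do not expect any obstacle at this stage: the proof is a direct assembly of the preceding results, and indeed the theorem is essentially a labeling of the conclusion of Lemma~\ref{lem:gd_sing_measure_zero} via Proposition~\ref{prop:invertible_ae_is_non-singular}. The real work was done upstream, where Lemma~\ref{lem:gd_sing_measure_zero} reduced non-singularity to showing that the polynomial $p(\theta)=\det(I-\eta H_L(\theta))$ is not identically zero; that in turn relied on Theorem~\ref{thm:eigenvalues_are_proper} to exhibit a witness $\theta\in M$ with $p(\theta)\neq 0$ and on Lemma~\ref{lem:measure_zero_set_poly} to conclude that $Z(p)$ has measure zero. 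Consequently the proof of the theorem itself reduces to a single sentence invoking those results.
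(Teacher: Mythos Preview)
Your proposal is correct and matches the paper's own proof essentially verbatim: the paper simply states that the result follows from Proposition~\ref{prop:invertible_ae_is_non-singular} and Lemma~\ref{lem:gd_sing_measure_zero}. Your additional remark that $L$ is polynomial (hence $G$ is smooth) is a harmless elaboration of a hypothesis the paper leaves implicit.
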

\begin{proof}
	The proposition follows from Proposition
	\ref{prop:invertible_ae_is_non-singular} and Lemma
	\ref{lem:gd_sing_measure_zero}.
\end{proof}
The results of Lemma \ref{lem:measure_zero_set_poly} can be extented to hold not
only for polynomial functions but, in fact, for analytic functions. We will need
such a result later.
\begin{lemma}[The zero set of analytic functions has measure zero]
	\label{lem:measure_zero_set_analytic}
	Let $f\colon \R^n \rightarrow \R$ be a non-zero real analytic function. Then the zero
	set of $f$, $Z(f)$, has measure zero.
\end{lemma}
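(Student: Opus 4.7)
The plan is to proceed by induction on the dimension $n$, mirroring the structure of Lemma~\ref{lem:measure_zero_set_poly} but replacing the appeal to the Fundamental Theorem of Algebra with the identity theorem for real analytic functions.

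For the base case $n=1$, the claim is that a non-zero real analytic function $f \colon \R \to \R$ has at most countably many zeros, hence $Z(f)$ has Lebesgue measure zero. This follows from the identity theorem: if $Z(f)$ had an accumulation point $t_0 \in \R$, the Taylor expansion of $f$ at $t_0$ would vanish to infinite order, forcing $f \equiv 0$ on the connected set $\R$.

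For the inductive step, assume the lemma holds in dimension $n-1$. Write a point of $\R^n$ as $(x,t)$ with $x \in \R^{n-1}$ and $t \in \R$, and define
\begin{equation*}
  A \coloneqq \{\,x \in \R^{n-1} : f(x,t) = 0 \text{ for all } t \in \R\,\}.
\end{equation*}
For $x \notin A$, the one-variable slice $t \mapsto f(x,t)$ is a non-zero real analytic function, so its zero set has one-dimensional measure zero by the base case. By Fubini's theorem \parencite[Theorem 3.4.4]{bogachev2007}, $Z(f)$ has measure zero provided $A$ has $(n-1)$-dimensional measure zero. The crux is therefore to show $|A|=0$.

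To do this, I would exhibit a non-zero real analytic function $g \colon \R^{n-1} \to \R$ with $A \subset Z(g)$; the inductive hypothesis then gives $|A| \leq |Z(g)| = 0$. The candidate is
\begin{equation*}
  g_k(x) \coloneqq \frac{\partial^k f}{\partial t^k}(x,0),
\end{equation*}
for some $k \geq 0$ to be chosen. Each $g_k$ is real analytic on $\R^{n-1}$ since $f$ is. The key claim is that at least one $g_k$ is not identically zero; otherwise, for every fixed $x$ the one-variable analytic function $t \mapsto f(x,t)$ would have all Taylor coefficients at $0$ vanishing, hence vanish on a neighbourhood of $0$, hence on all of $\R$ by the identity theorem, contradicting $f \not\equiv 0$. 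Picking such a $k$ and noting that every $x \in A$ forces $g_k(x) = 0$ yields $A \subset Z(g_k)$, closing the induction.

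The main subtlety, and the point worth being careful about, is the ``some $g_k$ is non-zero'' step: it relies precisely on the identity principle for real analytic functions of one variable applied fibrewise, which is the analytic analogue of ``a polynomial with all coefficients zero is the zero polynomial'' used in Lemma~\ref{lem:measure_zero_set_poly}. Once that ingredient is in place, the Fubini argument is routine.
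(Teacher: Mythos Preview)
Your proof is correct, but it follows a genuinely different route from the paper's. Both argue by induction on $n$ and use Fubini, but they slice in opposite directions. The paper slices along the \emph{first} coordinate into $(n-1)$-dimensional hyperplanes, argues by contradiction that if $Z(f)$ had positive measure then $f$ would vanish identically on an uncountable family of such hyperplanes (by the induction hypothesis), picks an accumulation point of that family, and then invokes the one-variable identity theorem along every line in the first-coordinate direction to force $f\equiv 0$. You instead slice along the \emph{last} coordinate into one-dimensional fibres, handle the good fibres with the base case, and control the bad set $A\subset\R^{n-1}$ by trapping it inside $Z(g_k)$ for a suitable Taylor coefficient $g_k=\partial_t^k f(\cdot,0)$, then apply the induction hypothesis to $g_k$. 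Your argument is more direct and constructive, avoiding the double accumulation-point manoeuvre; the trade-off is that you are implicitly using that partial derivatives of real analytic functions are again real analytic (so that $g_k$ is a legitimate input to the induction hypothesis), which is standard but worth stating explicitly.
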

\begin{proof}
	The proof proceeds by induction on $n$ and is similar to the proof of Lemma
	\ref{lem:measure_zero_set_poly}. Let $f \colon \R \rightarrow \R$ be a real
	analytic function. Suppose that $Z(f)$ is not a set having measure zero. Since it is not a
	null-set, it is an uncountable set, thus it has an accumulation
	point. From the identity theorem \parencite[Corollary 1.2.7]{krantz2002} we
	get that $f \equiv 0$, contradicting our assumption that $f$ is non-zero.

	Suppose the lemma is established for real analytic functions defined on
	$\R^{n-1}$. Assume that $Z(f)$ has is not a null-set. By Fubini's Theorem
	there is a a set $E \subset \R$ not having measure zero such that for all $x
	\in E$ the set $Z(f)\cap (\{x\}\times \R^{n-1})$ does not have measure zero. By the
	induction hypothesis we conclude that $f = 0$ on each of those
	hyperplanes. Since $E$ is uncountable, it has an accumulation point
	$a \in \R$. Thus there is a sequence of distinct points $a_k \mapsto a$ such
	that $f = 0$ on the hyperplanes with first coordinate in $\left\{ a_1, a_2, \cdots
	\right\} \cup \left\{ a \right\}$.

	Now take any line of the form $\R \times \{x_2\} \times \cdots \times \{x_n\}$,
	where $x_i \in \R$ are fixed numbers. Since $f$ is real analytic on this
	line and $f = 0$ on a set with an accumulation point on that line, we have
	$f = 0$ on that line. This was for true any line, hence $f \equiv 0$ on
	$\R^n$.
\end{proof}

\subsection{Dynamics of gradient descent} \label{subsection:dynamics}

We now arrive at the main results of this paper, where we characterize the
unstable fixed points of gradient descent with a pre-specified step-size $\eta>0$.
For this, we need to define a subset of $M$ which determines the quantitative
behaviour of the dynamics of gradient descent.
\begin{definition}[Weakly stable minima]
	Let $\eta > 0 $. We say a minimum $\theta \in M$ is \emph{weakly stable} if
	the smallest non-zero eigenvalue of the Hessian at $\theta$ is smaller or
	equal to $2/\eta$. We denote the set of all weakly stable minima by
	$M_{WS}$. 
	\label{def:stable_manifold}
\end{definition}
\begin{theorem}[On stability of minima]
  \label{thm:stability_of_minima}
  Let $\theta$ be a minimum. If $\theta \in M\setminus M_{WS}$ then it
  is unstable. If $\theta$ is stable, then $\theta \in M_{WS}$.
\end{theorem}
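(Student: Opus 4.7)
The plan is to reduce both claims to a direct application of Remark~\ref{rem:stability_from_linearization}, using the spectral information already established about the Hessian on $M$.

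First I would unpack what membership in $M\setminus M_{WS}$ means spectrally. By Definition~\ref{def:stable_manifold}, $\theta\notin M_{WS}$ says that the smallest non-zero eigenvalue of $H_L(\theta)$ is strictly greater than $2/\eta$. Combined with Proposition~\ref{thm:curvature_near_manifold}, which tells us that $H_L(\theta)$ has exactly $\dim M$ zero eigenvalues and $d_\theta - \dim M$ strictly positive eigenvalues (and no negative ones), this means \emph{every} non-zero eigenvalue $\lambda_i$ of $H_L(\theta)$ satisfies $\lambda_i > 2/\eta$, equivalently $\eta\lambda_i > 2$.

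Next I would translate this through the linearization identity from Remark~\ref{rem:eigv_and_hess}, namely $DG(\theta) = I - \eta H_L(\theta)$. Since $H_L(\theta)$ is symmetric, $DG(\theta)$ is simultaneously diagonalizable with it, and its eigenvalues are exactly $1 - \eta\lambda_i$. For any non-zero $\lambda_i$ we have $\eta\lambda_i > 2$, hence $1-\eta\lambda_i < -1$, so $|1-\eta\lambda_i| > 1$. Since there is at least one non-zero eigenvalue (as $d_\theta - \dim M \geq 1$ whenever $\dim \mathcal{M}_r \geq 1$, which is the non-trivial case), $DG(\theta)$ has an eigenvalue of modulus strictly greater than one. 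Remark~\ref{rem:stability_from_linearization} (citing Theorem 15.16 of Hale--Ko\c{c}ak) then yields that $\theta$ is unstable in the sense of Definition~\ref{def:stability}. This proves the first assertion.

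Finally, the second assertion is the contrapositive of the first: if $\theta\in M$ is stable then $\theta$ cannot lie in $M\setminus M_{WS}$, hence $\theta\in M_{WS}$. The only subtle point to flag is the boundary regime where some $1-\eta\lambda_i$ equals $-1$ (i.e.\ $\lambda_i = 2/\eta$ exactly); this is precisely the case the theorem \emph{excludes} from the unstable conclusion by defining $M_{WS}$ with a weak inequality, so no extra argument is needed. There is no real obstacle here — the work has already been done in Propositions~\ref{thm:curvature_near_manifold} and in the linearization remarks; this theorem is essentially a bookkeeping consequence.
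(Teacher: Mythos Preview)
Your proposal is correct and follows essentially the same route as the paper: translate the spectral condition defining $M_{WS}$ through $DG(\theta)=I-\eta H_L(\theta)$ and invoke Remark~\ref{rem:stability_from_linearization}. The only cosmetic difference is that you obtain the second assertion as the contrapositive of the first, whereas the paper argues it directly from the eigenvalue bound for stable fixed points; both are immediate.
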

\begin{proof}
  If $\theta \in M\setminus M_{WS}$, then there is at least one eigenvalue of
  $DG(\theta)$ whose modulus is greater than one. Hence, there is at least one direction along
  which $G$ is expanding, thus $\theta$ is unstable (cf.
  Remark~\ref{rem:stability_from_linearization}).

  If $\theta \in M$ is stable, then the moduli of the eigenvalues of $DG(\theta)$ are
  smaller or equal to one (cf. Remark~\ref{rem:stability_from_linearization}). Coupled
  with Remark~\ref{rem:eigv_and_hess}, we get that $|\lambda_i(\theta)| \leq
  2/\eta$ for all $i$.
\end{proof}
Theorem \ref{thm:stability_of_minima} provides us
with information only about the local behavior of $G$ for points near $M_{WS}$,
but more precise results can also be derived using additional properties of the
gradient descent map. In Definition~\ref{def:stable_set} and the paragraph
after it, we defined the stable set for a point $\theta$ (and a set) as the
totality of points which converge to $\theta$ under iterates of $G$. This means
that $W^s(M)$ consists of all points which converge to a minimum. 
The following theorem lets us determine the qualitative behavior of the dynamics
of gradient descent by examining only $M_{WS}$.
\begin{theorem}[Measure of $W^s(M)$]
  \label{thm:measure_of_convergence}
  There exist an $\eta_E > 0$ such that for any $\eta > \eta_E$, the set
  $W^s(M)$ of initializations that converge to a minimum has measure zero.
\end{theorem}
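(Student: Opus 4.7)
The plan is to combine the Center-Stable Manifold Theorem with the non-singularity of $G$ (Theorem \ref{thm:gradient_descent_is_non-singular}). Choose $\eta_E \coloneqq \max_i \eta_i$ from Corollary \ref{cor:diff_stability}, and fix $\eta > \eta_E$. Then, for every $\theta \in M$, the non-zero eigenvalues of $H_L(\theta)$ are strictly greater than $2/\eta$, so by Remark \ref{rem:eigv_and_hess} the Jacobian $DG(\theta) = I - \eta H_L(\theta)$ has exactly $\dim M$ eigenvalues equal to $1$ (the center directions, tangent to $M$) and $d_\theta - \dim M = \dim \mathcal{M}_r \geq 1$ eigenvalues of modulus strictly greater than $1$; in particular, there are no contracting directions.

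Next, I would invoke the Center-Stable Manifold Theorem (e.g. Shub's version, as used by \textcite{lee2019} and \textcite{ahn2022}) at each fixed point $\theta \in M$. Since the only non-expanding part of the spectrum of $DG(\theta)$ corresponds to the eigenvalue $1$, the theorem produces a local neighborhood $U_\theta$ and an embedded $C^1$ submanifold $W^{cs}_{\mathrm{loc}}(\theta) \subset U_\theta$ of dimension $\dim M < d_\theta$ such that every forward orbit of $G$ that remains in $U_\theta$ is contained in $W^{cs}_{\mathrm{loc}}(\theta)$. In particular, any point whose orbit converges to $\theta$ has some tail lying inside $W^{cs}_{\mathrm{loc}}(\theta)$. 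Being a submanifold of strictly lower dimension, $W^{cs}_{\mathrm{loc}}(\theta)$ has Lebesgue measure zero in $\R^{d_\theta}$.

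Then I would cover $M$ by countably many such neighborhoods using the second countability of $\R^{d_\theta}$: choose a countable subcover $\{U_{\theta_k}\}_{k\in\mathbb{N}}$ of $M$ and write $W \coloneqq \bigcup_{k} W^{cs}_{\mathrm{loc}}(\theta_k)$, which is a countable union of null-sets and hence null. If $\theta_0 \in W^s(G, M)$, its iterates $G^n(\theta_0)$ converge to some $\theta^* \in M$, so for some $N$ all $G^n(\theta_0)$ with $n \geq N$ lie in a neighborhood $U_{\theta_k}$ containing $\theta^*$, which forces $G^N(\theta_0) \in W^{cs}_{\mathrm{loc}}(\theta_k) \subset W$. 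Therefore
\begin{equation*}
  W^s(G,M) \subset \bigcup_{N \in \mathbb{N}} G^{-N}(W).
\end{equation*}
By Theorem \ref{thm:gradient_descent_is_non-singular}, $G$ is non-singular, so each $G^{-N}(W)$ is a null-set, and a countable union of null-sets is null; the theorem follows.

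The step I expect to be the main obstacle is justifying the passage from a pointwise application of the Center-Stable Manifold Theorem along a continuous family of fixed points to a \emph{countable} covering argument: one must check that the neighborhoods $U_{\theta_k}$ can be chosen so that the local invariance property is valid uniformly, and that the locally trapped tails really do lie in a single $W^{cs}_{\mathrm{loc}}(\theta_k)$. This is standard in the Lee–Ahn framework, but requires careful bookkeeping because $M$ is not a single hyperbolic fixed point but a whole manifold of non-hyperbolic fixed points sharing the same center direction (the tangent space of $M$).
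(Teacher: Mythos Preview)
Your proposal is correct and follows essentially the same route as the paper. The paper's proof simply verifies the hypotheses of Lemma~\ref{lem:ahn} (i.e.\ \textcite[Theorem~1]{ahn2022}) --- that $G$ is non-singular, that $DG(\theta)$ is invertible on $M$, and that $\lambda_{\max}(\theta)>2/\eta$ --- and then invokes that lemma as a black box; what you have written is precisely the standard Center-Stable Manifold plus countable-cover plus non-singularity argument that underlies Lemma~\ref{lem:ahn}, so you are unpacking the cited result rather than taking a different approach.
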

We state \textcite[Theorem 1]{ahn2022}, which we will use in proving
Theorem~\ref{thm:measure_of_convergence}.  We note that their
proof is an application of the Stable Manifold Theorem \parencite[Theorem III.7]{shub1987}.
\begin{lemma}[{\textcite[Theorem 1]{ahn2022}}]
	\label{lem:ahn}
	Suppose that for each minimum $\theta \in M$, it holds that
	$\lambda_{\max}(\theta) > 2/\eta$. Further assume that $G$ is non-singular
	and that for each minimum $\theta \in M$, the Jacobian matrix of $G$ is
	invertible. Then there is a measure-zero subset $N$ such that for all
	initializations $\theta \in \R^{d_\theta} \setminus N$ the gradient descent
	dynamics $\theta_{i+1} = G(\theta_i)$ does not converge to any of the minima
	in $M$.
\end{lemma}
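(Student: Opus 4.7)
The plan is to prove this exactly as \textcite{ahn2022} do: apply the center-stable manifold theorem \parencite[Theorem III.7]{shub1987} at each minimum and then globalize via non-singularity of $G$. The three hypotheses (sharpness $\lambda_{\max}(\theta) > 2/\eta$ at every minimum, invertibility of $DG$ at minima, and non-singularity of $G$) each play a distinct role: the first produces an unstable direction so that the local invariant manifold has positive codimension, the second makes $G$ a local diffeomorphism around each minimum so that the theorem applies, and the third lets us lift local null sets to a global one.

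For the local step, I fix $\theta \in M$ and examine the spectrum of $DG(\theta) = I - \eta H_L(\theta)$. By Proposition~\ref{thm:curvature_near_manifold} together with Remark~\ref{rem:eigv_and_hess}, this spectrum lies in $(-\infty, 1]$; the assumption $\lambda_{\max}(\theta) > 2/\eta$ gives at least one eigenvalue $1 - \eta \lambda_{\max}(\theta) < -1$, and invertibility rules out the eigenvalue $0$. The center-stable manifold theorem then produces an open neighborhood $U_\theta \ni \theta$ and an embedded $C^1$ submanifold $W^{cs}_{\mathrm{loc}}(\theta) \subset U_\theta$ of codimension equal to the number of unstable eigenvalues, which is at least one, such that every forward orbit staying in $U_\theta$ for all iterates is contained in $W^{cs}_{\mathrm{loc}}(\theta)$. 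In particular $W^{cs}_{\mathrm{loc}}(\theta)$ has Lebesgue measure zero.

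To globalize, I note that $M$ is second countable (being a smooth submanifold of $\R^{d_\theta}$ by Proposition~\ref{thm:min_are_manifold}), so the open cover $\{U_\theta\}_{\theta \in M}$ admits a countable subcover $\{U_k\}_{k \in \mathbb{N}}$ with corresponding local center-stable manifolds $W_k$. If an initialization $\theta_0$ has iterates $\theta_n = G^n(\theta_0)$ converging to some $\theta^* \in M$, then $\theta^* \in U_k$ for some $k$, and for all $n$ sufficiently large, say $n \geq N$, we have $\theta_n \in U_k$; the defining property of $W^{cs}_{\mathrm{loc}}$ then forces $\theta_N \in W_k$, i.e.\ $\theta_0 \in G^{-N}(W_k)$. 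Hence
\begin{equation*}
  W^s(M) \;\subseteq\; \bigcup_{k,\, N \in \mathbb{N}} G^{-N}(W_k).
\end{equation*}
Each $W_k$ is a null set; because $G$ is non-singular, iterating Definition~\ref{def:non-singular_map} shows that each $G^{-N}(W_k)$ is a null set; a countable union of null sets is null. Taking $N$ in the statement to be this union completes the proof.

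The main obstacle is the invocation of the center-stable manifold theorem. Its classical (purely stable) form requires a hyperbolic splitting, but $DG(\theta)$ has eigenvalue $1$ with multiplicity $\dim M$ arising from the zero eigenvalues of $H_L$ along the tangent directions to $M$; the fixed point is only partially hyperbolic. One must therefore invoke the center-stable variant, in which the $\dim M$ center directions are absorbed into $W^{cs}_{\mathrm{loc}}$, and the codimension is exactly the number of strictly unstable eigenvalues. It is the assumption $\lambda_{\max}(\theta) > 2/\eta$ that guarantees this codimension is at least one. A secondary care point is verifying that $G$ is a diffeomorphism on all of $U_\theta$ rather than just at $\theta$; this follows by shrinking $U_\theta$ if necessary, using continuity of $\det DG$ together with the assumed invertibility of $DG(\theta)$.
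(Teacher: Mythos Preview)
Your proof is correct and follows precisely the approach the paper indicates: the paper does not give its own proof of this lemma but cites \textcite{ahn2022} and remarks that ``their proof is an application of the Stable Manifold Theorem \parencite[Theorem III.7]{shub1987}.'' You have faithfully unpacked that sketch---local center-stable manifold at each minimum, countable subcover, pull back by non-singularity---so there is nothing substantive to compare.

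Two minor remarks on presentation. First, your appeals to Proposition~\ref{thm:curvature_near_manifold} and Proposition~\ref{thm:min_are_manifold} are specific to linear networks, whereas the lemma is stated for a general loss; the facts you actually need (that $H_L(\theta)\succeq 0$ at a minimum, and that $M\subset\R^{d_\theta}$ is second countable) hold without those propositions, so you could cite the second-order necessary condition and the hereditary second countability of Euclidean space instead. Second, your closing paragraph correctly flags that the relevant result is the \emph{center}-stable variant because of the eigenvalue $1$ coming from directions tangent to $M$; this is exactly what \parencite[Theorem III.7]{shub1987} provides, so the concern is already addressed by the cited reference.
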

\begin{proof}[of Theorem~\ref{thm:measure_of_convergence}]
	Lemma~\ref{lem:gd_sing_measure_zero} states that the map $ G $ is
	non-singular. Since $ \eta > \eta_E $, we have that the absolute value of
	the eigenvalues of $ DG(\theta) $ is greater or equal to $ 1 $ for all $
	\theta \in M $, hence $ DG(\theta) $ is invertible. The assumptions of
	Lemma~\ref{lem:ahn} are satisfied, thus we can apply it.
\end{proof}
The previous theorem tells us that if the step-size gets too large, then
convergence is possible only when initializing from a set having measure zero.
\begin{remark}[Statements similar to Theorem~\ref{thm:measure_of_convergence} exist]
  Statements similar to Theorem \ref{thm:measure_of_convergence} about the
  instability of critical points, especially saddles, have also been made in
  \textcite{ahn2022} and in \textcite{lee2019}.
  The latter show that the necessary condition for gradient
  descent to avoid maxima and saddle points is for $G$ to be non-singular. They
  show this is true for a large class of loss functions under two assumptions:
  Lipschitz continuity of the gradient and a step-size smaller than the
  sharpness. As we saw in the introduction, these are unreasonable assumptions
  for models used in practice. \textbf{We showed that these assumptions are not
  necessary for linear networks with quadratic loss function.}

  \textcite[Theorem 1]{ahn2022} prove a generic theorem about convergence of
  gradient descent to critical points, and we even use it in our proof of
  Theorem~\ref{thm:measure_of_convergence}. However, their proof is based on
  \textcite[Assumption 1]{ahn2022}, an assumption stated without trying to verify
  if it is relevant for models used in practice. \textbf{We showed that
  \textcite[Assumption 1]{ahn2022} actually holds for linear networks with
  quadratic loss.}
  \label{rem:what_us_new}
\end{remark}

%extension
\section{Extension: Non-linear Networks}
We now state generalizations of
Theorems~\ref{thm:gradient_descent_is_non-singular} and
\ref{thm:measure_of_convergence} to more general loss functions, under
reasonably weak conditions. 

\begin{theorem}
  If $L\colon \R^d \to \R$ is an analytic function and one eigenvalue of
  its Hessian is not constant then, for any $\eta > 0$, the gradient descent 
  map $G\colon \theta \mapsto \theta - \eta\nabla L(\theta)$ is
  non-singular.
  \label{thm:analytic_non_quad_non-singular}
\end{theorem}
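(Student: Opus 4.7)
The plan is to mirror the proof of Theorem~\ref{thm:gradient_descent_is_non-singular}, replacing the polynomial Lemma~\ref{lem:measure_zero_set_poly} with its analytic counterpart Lemma~\ref{lem:measure_zero_set_analytic}. The key object is $p(\theta) \coloneqq \det(DG(\theta)) = \det(I - \eta\, H_L(\theta))$; I aim to show that $p$ is a non-zero real analytic function on $\R^d$, after which Proposition~\ref{prop:invertible_ae_is_non-singular} delivers non-singularity of $G$.

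Analyticity of $p$ is immediate: since $L$ is real analytic, every entry $\partial^2 L/\partial\theta_i\partial\theta_j$ of $H_L(\theta)$ is analytic on $\R^d$, and $p$ is a polynomial combination of those entries.

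The crux is to show $p \not\equiv 0$, which I would handle by contradiction. Suppose $p \equiv 0$, so that $1/\eta$ is an eigenvalue of $H_L(\theta)$ for every $\theta\in\R^d$. Write $\chi(\theta, t) = \det(tI - H_L(\theta))$ for the characteristic polynomial; its coefficients in $t$ are elementary symmetric polynomials in the entries of $H_L$, hence analytic functions of $\theta$. Polynomial division in $t$ gives
$$
\chi(\theta, t) = (t - 1/\eta)\,Q(\theta, t) + R(\theta),
$$
with $Q$ having analytic coefficients and with remainder $R(\theta) = \chi(\theta, 1/\eta) \equiv 0$. Hence $\chi(\theta, t) = (t - 1/\eta)\,Q(\theta, t)$ factors globally over the ring of real analytic functions on $\R^d$, exhibiting a constant eigenvalue branch of $H_L$ identically equal to $1/\eta$---a contradiction with the hypothesis that some eigenvalue of $H_L$ is non-constant. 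Therefore $p\not\equiv 0$, Lemma~\ref{lem:measure_zero_set_analytic} gives that $S(G)=\{\theta : p(\theta) = 0\}$ has Lebesgue measure zero, and Proposition~\ref{prop:invertible_ae_is_non-singular} then concludes.

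The main obstacle I anticipate is precisely this non-vanishing step. The hypothesis ``one eigenvalue of $H_L$ is not constant'' must be strong enough to preclude the factor $(t-1/\eta)$ from dividing $\chi(\theta,t)$ \emph{for every} $\eta>0$. Read as ``no eigenvalue branch of $H_L$ is a constant function of $\theta$'', the contradiction above is immediate; under a weaker reading, one would instead have to track analytically the multiplicity and continuous selection of the root $1/\eta$ across connected components of parameter space on which the spectral structure of $H_L$ is stable, for example via Rellich-type analytic eigenvalue perturbation, to rule out a global constant root of the characteristic polynomial.
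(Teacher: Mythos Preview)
Your overall architecture---show that $p(\theta)=\det(I-\eta H_L(\theta))$ is real analytic and not identically zero, then invoke Lemma~\ref{lem:measure_zero_set_analytic} and Proposition~\ref{prop:invertible_ae_is_non-singular}---matches the paper exactly. The proofs diverge only in establishing $p\not\equiv 0$. The paper picks a one-dimensional analytic path $\gamma$ through two points where an eigenvalue differs and invokes Kato's analytic perturbation theory to parameterize the eigenvalues of $H_L\circ\gamma$ by real analytic functions $\lambda_i\circ\gamma$; it then argues that each zero set $Z(\lambda_i\circ\gamma-1/\eta)$ is null, so their union misses some $\bar t$ with $p(\gamma(\bar t))\neq 0$. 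Your polynomial-division argument is more elementary and global: it avoids the Kato/Rellich machinery entirely by factoring the characteristic polynomial directly over the ring of real analytic functions on $\R^d$. The concern you raise in your final paragraph is well-placed and applies equally to the paper: the paper's union-of-zero-sets step tacitly needs \emph{every} branch $\lambda_i\circ\gamma$ to be non-constant (otherwise one of them could be identically $1/\eta$), which is precisely the strong reading of the hypothesis you isolate. The Rellich-type perturbation theory you mention as a possible remedy is in fact what the paper already employs along $\gamma$, but it does not by itself resolve the ambiguity in the hypothesis.
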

\begin{proof}
  By assumption, there exist some eigenvalue $ \lambda $ of $ H_L $ and two
  points $ \theta_1, \theta_2 \in \R^d $ such that $ \lambda(\theta_1) \neq
  \lambda(\theta_2) $.  Take any analytic path $\gamma\colon \R \to \R^d$ such
  there exists $t_1\neq t_2\in \R$ with $\gamma(t_1) = \theta_1$ and
  $\gamma(t_2) = \theta_2$. This can be done since we can interpolate any finite
  number of points using analytic functions. Since $\gamma$ and the map $\theta
  \mapsto H_L(\theta)$ are both analytic, their composition, $t \mapsto
  H_L\circ\gamma(t) \in \R^{d\times d}$, is analytic as well. In this case, the
  eigenvalues of $H_L\circ\gamma$ can be globally parameterized by analytic
  functions, i.e. there exist analytic functions $\lambda_i\circ\gamma\colon \R
  \to \R$ which are equal to the eigenvalues of $H_L\circ\gamma(t)$ for all
  $t\in\R$ \parencite{kato1995}.

  For a non-constant analytic function, the set of where it is equal to a
  constant $c\in \R$, $Z(\lambda_i\circ\gamma - c) \subset \R$
  has measure zero (Lemma~\ref{lem:measure_zero_set_analytic}). For any $\overline{t} \in \R \setminus\cup_{i=1}^d
  Z(\lambda_i\circ\gamma) \subset \R$, it holds that
  $\lambda_i\circ\gamma(\overline{t}) \neq c$. In particular, for any $\eta>0$ there exists
  $\overline{\theta} \coloneqq \gamma(\overline{t}) \in \R^d$ such that $\det(I
  - \eta H_L(\overline{\theta})) \neq 0$. We have showed that the analytic map
  $ \theta \mapsto \det(I-\eta H_L(\theta)) $ is not constant, thus by
  Lemma~\ref{lem:measure_zero_set_analytic} its set of zeros has measure zero.
  We conclude that $G$ is invertible almost everywhere and an application of
  Proposition~\ref{prop:invertible_ae_is_non-singular} finishes the proof.
\end{proof}
We now give an easy to check criterion to determine if a given loss satisfies
the assumptions of Theorem~\ref{thm:analytic_non_quad_non-singular}.
\begin{proposition}
  Let $L$ be a bounded analytic function. Then, the eigenvalues of the Hessian of
  $L$ cannot be constant.
\end{proposition}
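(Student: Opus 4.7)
The plan is to reduce the claim to the classical Liouville theorem for bounded harmonic functions on $\R^d$. Suppose, for contradiction, that every eigenvalue function $\lambda_1(\theta), \ldots, \lambda_d(\theta)$ of $H_L(\theta)$ is constant on $\R^d$. Then their sum, the trace of the Hessian, is constant as well, i.e.\ the Laplacian of $L$ is a constant function,
\begin{equation*}
	\Delta L(\theta) = \mathrm{tr}\bigl(H_L(\theta)\bigr) = c, \qquad \theta \in \R^d,
\end{equation*}
for some $c \in \R$. So the whole problem reduces to showing that a bounded function on $\R^d$ with constant Laplacian must itself be constant.

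First I would rule out $c \neq 0$ by a spherical-mean computation. Setting
\begin{equation*}
	M(r) \coloneqq \frac{1}{|S_r^{d-1}|}\int_{\{\|\theta\|=r\}} L(\theta)\, dS(\theta),
\end{equation*}
a short calculation using the divergence theorem applied to $\Delta L = c$ yields $M'(r) = cr/d$, and integrating gives $M(r) = L(0) + \tfrac{c}{2d}\,r^2$. Since $L$ is bounded, $|M(r)| \leq \sup|L| < \infty$ for every $r$, which forces $c = 0$. In the remaining case $c = 0$, the function $L$ is harmonic and bounded on $\R^d$, so Liouville's theorem forces $L$ to be constant. This contradicts the (implicit) non-triviality of $L$ as a loss; the degenerate case $L \equiv \mathrm{const}$ must in any event be excluded, since otherwise every eigenvalue of $H_L$ is identically zero and hence constant.

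The main subtlety I expect is precisely the boundary case $c = 0$: merely knowing that the Laplacian is constant does not by itself contradict boundedness, and one genuinely needs the Liouville step (or, equivalently, the mean-value property combined with boundedness) to close it. The case $c \neq 0$, by contrast, is killed immediately by the quadratic growth of $M(r)$. It is worth noting that the argument never uses the full hypothesis that \emph{all} eigenvalues of $H_L$ are constant --- only that their sum is --- so the conclusion in fact holds under the weaker assumption that $\mathrm{tr}(H_L)$ is constant, which gives a particularly easy sufficient criterion to verify in practice.
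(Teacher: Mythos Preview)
Your argument is correct and rather clean. Note, however, that you establish a slightly weaker statement than the paper attempts: you assume that \emph{every} eigenvalue of $H_L$ is constant and derive a contradiction, whereas the paper's proof starts from the hypothesis that \emph{a single} eigenvalue is constant. Your version is exactly what is needed downstream in Theorem~\ref{thm:analytic_non_quad_non-singular} (``one eigenvalue is not constant''), and in fact the stronger reading fails in general --- take $L(x,y)=\sin x$ on $\R^2$, which is bounded, analytic, and non-constant, yet $H_L=\diag(-\sin x,0)$ has the constant eigenvalue $0$.

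The two routes are genuinely different. The paper fixes an eigenbasis of $H_L(0)$, expands $L$ as a Taylor series in those coordinate directions, and argues that along the direction of the constant eigenvalue the series collapses to a quadratic polynomial, forcing unboundedness. You instead pass to the trace, recognise $\Delta L=c$, and invoke classical potential theory: the spherical-mean computation kills $c\neq 0$ by quadratic growth, and Liouville's theorem handles $c=0$. Your approach is shorter, does not actually require analyticity ($C^2$ and bounded suffice, since Liouville for harmonic functions needs no extra regularity), and --- as you correctly point out --- really only uses that $\mathrm{tr}(H_L)$ is constant, which is an even easier criterion to verify in practice than any statement about individual eigenvalues. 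The paper's approach, when it goes through, yields explicit one-dimensional information about $L$ along a distinguished direction, but at the price of a more delicate bookkeeping of higher-order terms.
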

\begin{proof}
  We prove the statement by contradiction. Suppose that $L$ is a bounded,
  analytic function and that at least one of the eigenvalues of the Hessian, say
  $\lambda_l$, is constant.

  Consider the orthonormal eigenvalue basis $(v_1, \dots, v_d)$ given by the
  eigenvalue decomposition of the Hessian at $0$. Since $L$ is analytic, it is
  equal to its Taylor series expansion, which we can write w.r.t. the basis
  above as follows:
  \begin{equation}
	\label{eq:taylor_series}
	L(\theta) = L(0) + \sum_{n=1}^\infty \frac{1}{n!} \sum_{i_1,\dots,i_n=1}^d
	\nabla_{v_{i_1}}\dots\nabla_{v_{i_n}}L (0) \theta_{i_1}\dots \theta_{i_n}.
  \end{equation}

  From the assumption that at least one of the eigenvalues of $H_L$ is constant, 
  we get the following expression for the higher order derivatives of $L$
  \begin{equation*}
	\nabla_{v_{i_n}}\dots\nabla_{v_{i_2}}\nabla_{v_{i_1}}L(0) = \left\{
	  \begin{array}{cl}
		0, &\text{ if } i_1 \neq i_2,\\
		\nabla_{v_{i_n}}\dots\nabla_{v_{i_3}}\lambda_l(0) = 0, &\text{ if } i_1 = i_2 = l, \\
		\nabla_{v_{i_n}}\dots\nabla_{v_{i_3}}\lambda_{i_1}(0), &\text{ if } i_1 = i_2 \neq l.
	  \end{array}
	\right.
  \end{equation*}

  We see that for a choice of $\theta_l = t \in \R$ and $\theta_{i\neq l} = 0$, all terms
  of order higher than $2$ vanish in equation~\eqref{eq:taylor_series}:
  \begin{equation}
	L(0, \dots, 0, t, 0, \dots, 0) = L(0) + \nabla_{v_{l}}L(0)t +
	\lambda_l(0)t^2.
	\label{eq:loss_is_quad}
  \end{equation}

  This shows that $L(t) \to \infty$ as $t \to \infty$. Thus, $L$ cannot be
  bounded.
\end{proof}
\begin{example}
  \label{rem:const_l_quad_L} One way to think about
  Theorem~\ref{thm:analytic_non_quad_non-singular} is that the loss function
  ``is not quadratic''. More precisely, if one of the eigenvalues, say
  $\lambda$, were constant, then we can integrate along a curve
  $\gamma_\lambda\colon \R\to\R^d$ to get that $L\circ\gamma_\lambda(t) =
  at^2+bt+c$. Since this is true for any $t\in \R$ we have that
  Theorem~\ref{thm:analytic_non_quad_non-singular} is true for any bounded loss
  function. In particular, it holds true for any non-linear neural network where
  the activation function is \emph{analytic} and \emph{bounded}, e.g. sigmoid or
  hyperbolic tangent, and $l$ is \emph{analytic} on the image of $\mu$, e.g.
  MSE. 
\end{example}

Under the assumption that the eigenvalues of the Hessian are bounded from
below by a positive constant we find that convergence to critical points is
impossible (except for initializations lying in a set of measure zero) even for
non-linear networks.
\begin{theorem}
  Let $L\colon \R^d \to \R$ be as in
  Theorem~\ref{thm:analytic_non_quad_non-singular} and $M$ its set of global
  minima. Additionally, if there exists a non-zero eigenvalue $\lambda_p$ for every $\theta
  \in M$ and $\inf_{\theta\in M}\lambda_p(\theta) > 0 $, then there exists
  $\eta_E > 0$ such that for all $\eta > \eta_E$, the set $W^s(M)$ has
  measure zero.
  \label{thm:measure_zero_non-linear}
\end{theorem}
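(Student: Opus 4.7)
The strategy is to reduce the statement to an application of Lemma~\ref{lem:ahn} for a suitable threshold $\eta_E$. I would set
\[
\eta_E \;\coloneqq\; \frac{2}{\inf_{\theta \in M}\lambda_p(\theta)},
\]
which is positive and finite by hypothesis. For any $\eta > \eta_E$ and any $\theta \in M$ one then has
\[
\lambda_{\max}(H_L(\theta)) \;\geq\; \lambda_p(\theta) \;\geq\; \inf_{\theta' \in M}\lambda_p(\theta') \;>\; \frac{2}{\eta},
\]
verifying the first hypothesis of Lemma~\ref{lem:ahn}. The second hypothesis, non-singularity of $G$, is exactly the content of Theorem~\ref{thm:analytic_non_quad_non-singular}, whose assumptions on $L$ are precisely those we have assumed here. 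Both checks are mechanical.

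The main obstacle will be the third hypothesis of Lemma~\ref{lem:ahn}: that $DG(\theta) = I - \eta H_L(\theta)$ be invertible at \emph{every} $\theta \in M$. We have pointwise control only over the distinguished eigenvalue $\lambda_p$, which our choice of $\eta_E$ keeps bounded away from $1/\eta$; but a different non-zero eigenvalue of $H_L$ could coincidentally equal $1/\eta$ at some minimum, making $DG$ singular there.

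My plan is to finesse this via the decomposition $M = M_{\text{inv}} \sqcup M_{\text{sing}}$, where $M_{\text{sing}} \coloneqq \{\theta \in M : \det DG(\theta) = 0\}$, and handle the two pieces separately. The map $\theta \mapsto \det(I - \eta H_L(\theta))$ is analytic on $\R^d$ and, by the non-constancy argument inside the proof of Theorem~\ref{thm:analytic_non_quad_non-singular}, is not identically zero; Lemma~\ref{lem:measure_zero_set_analytic} therefore places $M_{\text{sing}}$ inside a Lebesgue null set of $\R^d$. On $M_{\text{inv}}$ all three hypotheses of Lemma~\ref{lem:ahn} hold, and since the underlying stable-manifold argument is local at each fixed point, restricting the argument to the subset $M_{\text{inv}}$ preserves it and yields that $W^s(M_{\text{inv}})$ is null. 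For the residual $W^s(M_{\text{sing}})$, each such fixed point still carries the expanding eigenvalue $1 - \eta\lambda_p(\theta) < -1$ coming from the $\lambda_p$ direction, so a center-stable manifold refinement of the stable manifold theorem (or, equivalently, combining the non-singularity of $G$ with the fact that $M_{\text{sing}}$ is already null in $\R^d$) shows $W^s(M_{\text{sing}})$ is also null. Putting the pieces together gives $W^s(M) = W^s(M_{\text{inv}}) \cup W^s(M_{\text{sing}})$ is of measure zero, as required.
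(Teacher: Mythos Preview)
Your overall strategy---defining $\eta_E = 2/\inf_{\theta \in M}\lambda_p(\theta)$ and reducing to Lemma~\ref{lem:ahn}---is exactly the paper's. The paper's proof is two lines: set $\lambda_E = \inf_{\theta\in M}\lambda_p(\theta)$, observe that for $\eta > \eta_E = 2/\lambda_E$ one has $M_{WS} = \varnothing$, and then invoke the proof of Theorem~\ref{thm:measure_of_convergence} verbatim. The invertibility of $DG(\theta)$ at every minimum, which you flag as the main obstacle, is dispatched there by the observation that \emph{every} eigenvalue of $DG(\theta)$ has modulus at least~$1$. This uses the reading of the hypothesis under which $\lambda_p$ controls the \emph{smallest} non-zero eigenvalue of $H_L$ on $M$ (that is precisely what $M_{WS}=\varnothing$ encodes), so that no non-zero eigenvalue of $H_L$ can equal $1/\eta$. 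Under that reading your decomposition $M = M_{\mathrm{inv}}\sqcup M_{\mathrm{sing}}$ is unnecessary because $M_{\mathrm{sing}}$ is empty.

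If instead you keep your more literal reading (only one distinguished eigenvalue is controlled), then the split is a reasonable idea, but your treatment of $W^s(M_{\mathrm{sing}})$ has a real gap. The parenthetical ``or, equivalently, combining the non-singularity of $G$ with the fact that $M_{\mathrm{sing}}$ is already null'' is not a valid alternative: $W^s(M_{\mathrm{sing}})$ is \emph{not} $\bigcup_{n\ge 0} G^{-n}(M_{\mathrm{sing}})$, so non-singularity of $G$ gives no direct control over it. The other route you sketch---a center-stable manifold argument at fixed points where $DG$ is singular---would require a Stable Manifold Theorem that does not assume invertibility of the linearization, which is strictly beyond the Shub result underpinning Lemma~\ref{lem:ahn} and is not provided in the paper. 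So under your reading the argument is incomplete; under the paper's reading (which its claim $M_{WS}=\varnothing$ makes explicit) none of this extra machinery is needed.
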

\begin{proof}
  Let $\lambda_E = \inf_{\theta \in M}(\lambda_p(\theta)) > 0$. We get that $\eta_E =
  2/\lambda_E$ and that for all $\eta > \eta_E$, $M_{WS} = \varnothing$.
  From here on, the proof is identical to as that of Theorem~\ref{thm:measure_of_convergence}.
\end{proof}

\subsection{Experiments}
In this section we validate our results. We hypothesize that
Theorem~\ref{thm:measure_zero_non-linear} also holds for analytic loss functions
which are not bounded. We consider the following network architecture: fully
connected with $784$ neurons in the input layer, $32$ in the hidden layers, and
$2$ in the output layer. The number of hidden layers is specified when presenting
the result. The networks in \figref{fig:conv_fig} use GELU activation function
and squared loss; those in \figref{fig:exp} use ReLU activation and softmax with
cross-entropy as loss. In both cases, for each different step size we train on a
subset MNIST \parencite{lecun1998} consisting of images of only zeros and ones. For
the networks in \figref{fig:conv_fig} a batch consisting of all training
samples is used; in \figref{fig:exp} we use minibatches of size $64$.
\pgfplotstableread[row sep=\\, col sep=&]{
  step 	& 4 & 8 & 16 & 32\\
  0.125	& 1 & 1 & 1  & 1 \\
  0.25  & 1 & 1 & 1 & 1\\
  0.5 	& 1 & 1 & 1 & 1\\
  1	  	& 0 & 1 & 1 & 1\\
  2	  	& 0 & 0 & 0 & 0\\
  4	  	& 0 & 0 & 0 & 0\\
}\mydata
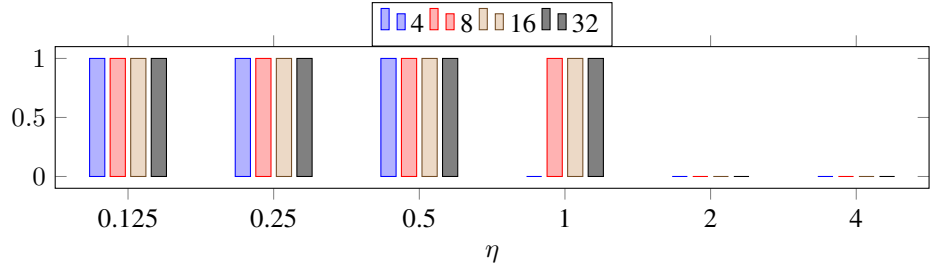
\begin{figure}[h]
  \centering
  \begin{tikzpicture}[]
	\begin{axis}[
		ybar,
		bar width=0.2cm,
		width=\linewidth,
		height=3.46cm,
		xlabel={{$\eta$}},
		symbolic x coords={0.125,0.25,0.5,1,2,4},
		xtick=data,
		legend style={at={(0.5, 1.0)},anchor=south, legend columns=-1},
	  ]
	  \addplot table[x=step,y=4]{\mydata};
	  \addplot table[x=step,y=8]{\mydata};
	  \addplot table[x=step,y=16]{\mydata};
	  \addplot table[x=step,y=32]{\mydata};
	  \legend{4,8,16,32}
	\end{axis}
  \end{tikzpicture}
  \caption{Percentage of trajectories that converge after gradient descent
	training with fixed step size on MNIST.  GELU and MSE were used as activation
	function and loss. The number in the legend specifies the network depth.
	Convergence does not happen for $\eta > 1$.}
  \label{fig:conv_fig}
\end{figure}
Even for non-analytic activation function and SGD, we expect that, as the step
size gets bigger, fewer trajectories will converge, leading to a behavior for
the size of the trapping region similar to the one in Figure~\ref{fig:2}. Figure~\ref{fig:exp} presents our findings. For details see the appendix.
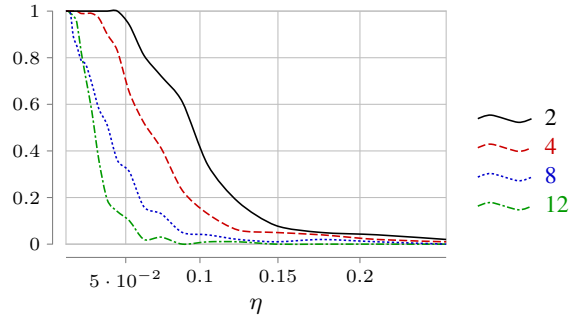
\begin{figure}[h] 
    \centering
	\usetikzlibrary{datavisualization}

\begin{tikzpicture}[yscale=1,xscale=1]
	\datavisualization [
		scientific axes=clean, all axes={grid},
		x axis={label=$\eta$, function=\pgfdvmathexp{\pgfvalue}{\pgfvalue}},
		visualize as smooth line/.list={mod2, mod4, mod8, mod12}, 
		every data set label/.append style={text colored},
		mod2={label in legend={text={2}}},
		mod4={label in legend={text={4}}},
		mod8={label in legend={text={8}}},
		mod12={label in legend={text={12}}},
		style sheet = strong colors,
		style sheet = vary dashing]
		data [set=mod2] {
			x,y
			0.0078125,1.0
			0.009290680585958758,1.0
			0.011048543456039806,1.0
			0.013139006488339289,1.0
			0.015625,1.0
			0.018581361171917516,1.0
			0.02209708691207961,1.0
			0.026278012976678578,1.0
			0.03125,1.0
			0.03716272234383503,1.0
			0.04419417382415922,1.0
			0.052556025953357156,0.94
			0.0625,0.81
			0.07432544468767006,0.72
			0.08838834764831845,0.61
			0.10511205190671431,0.34
			0.125,0.18
			0.14865088937534013,0.08
			0.1767766952966369,0.05
			0.21022410381342863,0.04
			0.25,0.02
		}
		data [set=mod4] {
			x,y
			0.0078125,1.0
			0.009290680585958758,1.0
			0.011048543456039806,1.0
			0.013139006488339289,1.0
			0.015625,1.0
			0.018581361171917516,0.99
			0.02209708691207961,0.99
			0.026278012976678578,0.99
			0.03125,0.97
			0.03716272234383503,0.9
			0.04419417382415922,0.83
			0.052556025953357156,0.65
			0.0625,0.52
			0.07432544468767006,0.41
			0.08838834764831845,0.23
			0.10511205190671431,0.13
			0.125,0.06
			0.14865088937534013,0.05
			0.1767766952966369,0.04
			0.21022410381342863,0.02
			0.25,0.01
		}
		data [set=mod8] {
			x,y
			0.0078125,1.0
			0.009290680585958758,1.0
			0.011048543456039806,0.99
			0.013139006488339289,0.89
			0.015625,0.85
			0.018581361171917516,0.79
			0.02209708691207961,0.77
			0.026278012976678578,0.69
			0.03125,0.58
			0.03716272234383503,0.51
			0.04419417382415922,0.36
			0.052556025953357156,0.31
			0.0625,0.16
			0.07432544468767006,0.13
			0.08838834764831845,0.05
			0.10511205190671431,0.04
			0.125,0.02
			0.14865088937534013,0.01
			0.1767766952966369,0.02
			0.21022410381342863,0.01
			0.25,0.0
		}
		data [set=mod12] {
			x,y
			0.0078125,1.0
			0.009290680585958758,1.0
			0.011048543456039806,0.99
			0.013139006488339289,0.98
			0.015625,0.95
			0.018581361171917516,0.83
			0.02209708691207961,0.71
			0.026278012976678578,0.57
			0.03125,0.36
			0.03716272234383503,0.19
			0.04419417382415922,0.14
			0.052556025953357156,0.1
			0.0625,0.02
			0.07432544468767006,0.03
			0.08838834764831845,0.0
			0.10511205190671431,0.01
			0.125,0.01
			0.14865088937534013,0.0
			0.1767766952966369,0.0
			0.21022410381342863,0.0
			0.25,0.0
		};
\end{tikzpicture}
	\caption{Size of the trapping region
	  (as a percentage of trajectories that converge out of 100 total
	  initialisations) for a range of step sizes and models trained with SGD on
	  MNIST. The number in the legend represents the number of hidden layers. }
	\label{fig:exp}
\end{figure}

%conclusion
\section{Discussion}
This work has analyzed the convergence properties of gradient descent in the
large learning rate regime. In the case of linear neural networks, we prove that
convergence is impossible, except for initializations from a set of measure zero
(Theorem~\ref{thm:measure_of_convergence}). In
Theorem~\ref{thm:measure_zero_non-linear} the same result is proven, under mild
assumptions, for more general loss functions. The validity of these assumptions
is shown through experiments.  We conclude by discussing some open questions.

\paragraph{Do Theorems~\ref{thm:analytic_non_quad_non-singular}
and~\ref{thm:measure_zero_non-linear} hold for non-linear networks?} If the
activation function and $l$ are analytic (e.g. sigmoid or the hyperbolic
tangent), then the loss and $G$ are analytic. If $L$ is bounded, we showed that
the eigenvalues of $H_L$ can't be constant. Activation functions such as ReLU
need not have this property, and it is not clear if $G$ is non-singular in this
case. The additional assumption of Theorem~\ref{thm:measure_zero_non-linear}
(namely that there exists a non-zero eigenvalue of the Hessian that is lower
bounded by a positive constant along $ M $) is harder to verify as it
necessitates a comprehensive study of the minima of the loss function. As a
counterexample, the loss function $(x,y) \mapsto \frac{1}{2}(y^2-x^2)^2$ has
vanishing Hessian at $(0,0)$.

\paragraph{Are the minima manifolds for non-linear networks too?}
It has been shown that for non-linear networks operating in the interpolating
regime the minima form a manifold \parencite{cooper2018}. However, this is not a
necessary condition for the minima to be a manifold, since linear networks
cannot be in the interpolating regime for nonlinear data, yet we showed that
their global minima form a manifold. We are not aware of any straightforward
methods to show that the minima determine a manifold for non-linear networks
that are not interpolating, neither of any methods to easily adapt our results
for linear networks even to networks with simple activation functions such as
polynomial functions.

\paragraph{What happens when the loss functional $l$ is changed?}
To show that the minima of the loss form a manifold, we used the fact that
the loss functional $l$ has a unique minimum in both the filling and
non-filling case. This is a special property of the quadratic loss and
\textcite{trager2020} show that if one applies even infinitesimal
changes to it, the situation changes drastically: it is no longer the
case that $l|_{M_r}$ admits a unique minimum for all choices of
$W^*_0$. There are two open directions regarding this aspect. First, how
does the loss landscape change for different loss functionals $l$ in the
case of linear networks? Second, what are the critical
points of $l$ restricted to the functional space determined by
non-linear networks? 

\paragraph{Can gradient descent avoid saddles and maxima?}
\textcite{lee2019} use the assumptions of Lipschitz continuity of the gradient
and the fact that the step-size is bounded by the inverse of the sharpness to
show that the gradient descent map $G$ is a local diffeomorphism and thus
conclude that gradient descent can't converge to saddles or maxima (except for
initialisations from a set of measure zero). These assumptions are needed
twofold: first to be able to apply the Stable Manifold Theorem referenced
from~\textcite{shub1987}; second to show that $ G $ is non-singular. We have seen
that the gradient descent map is non-sigular and invertible almost everywhere
even if the assumptions are dropped. More general versions of the Stable Manifold
Theorem have been proved in~\textcite{ruelle1980} and~\textcite{ruelle1979}. An
application of the results in these papers might show that the ability of
gradient descent to avoid critical points and saddles is generic, in the sense
that is hold for almost all analytic loss functions.

\paragraph{Existence of forward-invariant subsets?}
The more difficult question becomes the second assumption in
\textcite{ahn2022}: do there exist any compact forward-invariant sets? An answer to
this question would provide a more complete understanding of optimization in the
edge of stability regime that gradient descent with a big step-size has been observed to
operate in: our results explain why, for a big step-size, gradient descent cannot
converge; an answer to the question related to the existence of
forward-invariant sets would explain why gradient descent does not diverge to
infinity. After this qualitative analysis, a more quantitative one could be
pursued: how do the attractors look like and what kind of asymptotic dynamics
takes place inside them?

%additional
\begin{appendices}

\section{Proof that $\lambda_{i}$ are proper maps}
\newcommand{\N}{\mathbb{N}}
\newcommand{\abs}[1]{|#1|}

To prove Corollary \ref{cor:diff_stability} we need to show that the
eigenvalues of the Hessian of the loss function have a global minimum. For
linear networks it actually turns out that they are proper maps (which, in
finite-dimensional vector spaces, is the same as saying that they are
norm-coercive, i.e. the norm of the function goes to infinity as the norm of the
argument does). We start by going over some topological facts. It is necessary to have
this topological discussion since a description of the asymptotic behaviour of
functions defined on a manifold through the analysis of extensions of these
functions on the Euclidean space in which the manifold is embedded requires a
careful investigation. A first example where a naive application of the definitions
gives unexpected results comes from considering the open ball of unit radius
embedded in Euclidean space. The sequence $\left\{ \frac{i-1}{i} e
\right\}_{i=1}^\infty$ diverges to infinity as a sequence in the open ball for any
unit vector $ e $, however it is a convergent sequence in the ambient Euclidean space. After
the topological discussion, we show  that the eigenvalues are proper maps,
proving Theorem \ref{thm:eigenvalues_are_proper}.

\subsection{Some topological facts}
\begin{definition}[\bf Proper maps]
	A continuous map $f\colon X \to Y$ between two topological spaces $X$ and
	$Y$ is proper if the preimage of any compact set is compact.
	\label{def:proper_map}
\end{definition}

We use the additional structure of the problem to find an equivalent condition.
In second countable Hausdorff topological spaces, we can check that a map is
proper using sequences.

\begin{definition}
	A sequence $(x_i)$ in a topological space $X$ is said to diverge to infinity
	if for every compact set $K \subseteq X$ there are at most finitely many
	values $i$ for which $x_i \in K$.
	\label{def:div_to_infty}
\end{definition}

\begin{proposition}
	If $X$ is a second countable Hausdorff space and $f\colon X \to Y$ a continuous
	map, then $f$ is proper if and only if $f$ takes sequences diverging to
	infinity in $X$ to sequences diverging to infinity in $Y$.
	\label{prop:proper_seq}
\end{proposition}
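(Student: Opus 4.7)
The plan is to split the proof into the two implications, with the forward direction being a direct unpacking of definitions and the reverse requiring the hypotheses on $X$.

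For the ($\Rightarrow$) direction, I would assume $f$ is proper and take a sequence $(x_i)$ diverging to infinity in $X$. Given any compact $K \subseteq Y$, properness gives that $f^{-1}(K)$ is compact in $X$; by definition of divergence to infinity in $X$, only finitely many $x_i$ lie in $f^{-1}(K)$, hence only finitely many $f(x_i)$ lie in $K$. So $(f(x_i))$ diverges to infinity in $Y$.

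For the ($\Leftarrow$) direction I would argue by contraposition. Suppose $f$ fails to be proper: there exists a compact $K \subseteq Y$ with $f^{-1}(K)$ non-compact. Note first that $K$ is closed in the Hausdorff space $Y$, so $f^{-1}(K)$ is closed in $X$ by continuity of $f$. The goal is to exhibit a sequence $(x_i) \subseteq f^{-1}(K)$ that diverges to infinity in $X$; its image lies in the compact set $K$, which is a witness against the hypothesis, since a sequence contained in a compact set cannot diverge to infinity.

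To produce and identify such a sequence, I would invoke two standard point-set facts that are exactly where the hypotheses on $X$ are used. First, in a second countable Hausdorff space, a subset is compact if and only if it is sequentially compact (second countability yields the Lindel\"of property, so sequential compactness implies compactness; first countability, which follows from second countability, yields the reverse implication). Second, a sequence in such a space diverges to infinity if and only if no subsequence converges in $X$: one direction uses that a convergent sequence together with its limit is compact and contains infinitely many terms; the other extracts a subsequence lying in a compact set and applies sequential compactness. Given these, non-compactness of the closed set $f^{-1}(K)$ yields, via the first fact, a sequence $(x_i) \subseteq f^{-1}(K)$ with no subsequence convergent in $f^{-1}(K)$; since $f^{-1}(K)$ is closed, this is the same as having no subsequence convergent in $X$, and the second fact then says $(x_i)$ diverges to infinity in $X$, completing the contradiction.

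The main obstacle is the subsequence characterization of divergence to infinity, and more specifically the direction that extracts a convergent subsequence from a sequence hitting a compact set infinitely often; this is exactly the step where second countability (through first countability and the Lindel\"of-based equivalence with sequential compactness) and the Hausdorff assumption cannot be dropped. Once this topological lemma is in hand, the main proposition follows by packaging it with closedness of $f^{-1}(K)$ as outlined above.
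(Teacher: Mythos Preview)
Your forward direction is clean, and the skeleton of your reverse direction is right, but there is a genuine gap: you assert that ``$K$ is closed in the Hausdorff space $Y$,'' yet the proposition as stated assumes only that $X$ is second countable Hausdorff, with no hypothesis on $Y$. You use closedness of $f^{-1}(K)$ in an essential way, namely to pass from ``no subsequence converges in $f^{-1}(K)$'' to ``no subsequence converges in $X$.'' Without it that step fails, and in fact the proposition itself fails: take $X=[0,1]$, let $Y=\{a,b\}$ carry the Sierpi\'nski topology with $\{a\}$ open, and set $f(0)=b$, $f(x)=a$ for $x>0$. Then $f$ is continuous, $K=\{a\}$ is compact, $f^{-1}(K)=(0,1]$ is not compact, so $f$ is not proper; but $X$ is compact, so no sequence in $X$ diverges to infinity and the sequential condition holds vacuously.

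The paper's own proof is simply a citation of two propositions from Lee's textbook, so there is no argument there to compare against; your route is a direct unpacking of those results. Your argument becomes correct once you add the hypothesis that $Y$ is Hausdorff (or merely that compact subsets of $Y$ are closed). In the paper's actual application the codomain is $\mathbb{R}$, so nothing downstream is affected, but you should flag that the statement as written needs this extra assumption on $Y$ rather than silently importing it.
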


\begin{proof}
	We apply \textcite[Proposition 4.92]{lee2011} and \textcite[Proposition
	4.93]{lee2011}.
\end{proof}

The setting of our problem consists of a manifold $M$ sitting in an ambient Euclidean space.
To use Proposition \ref{prop:proper_seq} we have to take sequences in $M$
diverging to infinity. The next propositions show that a sequence in $M$
diverges to infinity if and only if there exists a sequence in $\R^{d_\theta}$
that diverges to infinity and additionally all terms of the sequence lie in $M$.
This seems like a trivial statement, but it does not necessarily hold if the
ambient space is not Hausdorff or $M$ is not closed (see the example in the
short discussion at the beginning).

\begin{proposition}
	Let $X$ be a Hausdorff space and $X' \subset X$ a closed subspace. Then, a
	set $K' \subset X'$ is compact if and only if there exists a compact subset
	$K \subset X$ such that $K' = K \cap X'$.
	\label{prop:compact_subspaces}
\end{proposition}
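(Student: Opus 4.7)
The plan is to prove the two directions separately, using standard facts about compactness in Hausdorff spaces and the subspace topology. Neither direction involves any deep machinery; the statement is essentially a bookkeeping result about how compact subsets of a closed subspace relate to compact subsets of the ambient space.

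For the forward implication, suppose $K' \subset X'$ is compact. I would simply take $K \coloneqq K'$ viewed as a subset of $X$. The crucial observation is that compactness is an intrinsic property: a set $K' \subset X'$ is compact in the subspace topology of $X'$ if and only if it is compact as a subset of $X$. This is because any open cover of $K'$ by open sets of $X$ restricts to an open cover of $K'$ by open sets of $X'$ (and vice versa, by extending). Hence $K' \subset X$ is compact, and trivially $K' = K' \cap X'$ since $K' \subseteq X'$. This direction does not use the assumptions that $X$ is Hausdorff or that $X'$ is closed.

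For the backward implication, suppose there is a compact set $K \subset X$ with $K' = K \cap X'$. I would argue that $K'$ is a closed subset of the compact set $K$ (in the subspace topology on $K$), and therefore compact. To see this, note that $X'$ is closed in $X$ by hypothesis, so $K \cap X'$ is closed in $K$ (intersections of closed sets with subspaces are closed in the subspace topology). Since closed subsets of compact spaces are compact, $K'$ is compact in $K$, which by the intrinsic compactness argument above is the same as being compact in $X'$.

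The main obstacle, if any, is being careful with the distinction between compactness in $X$, compactness in $X'$, and compactness in $K$, since the proposition is set in generality greater than the Euclidean setting of the rest of the paper. The Hausdorff hypothesis on $X$ is not strictly used in the argument above, but it is standard to include it in this kind of statement because it guarantees that compact subsets are well-behaved (in particular, closed in $X$), which is tacitly exploited elsewhere in the appendix when relating divergent sequences in $M$ to those in the ambient space $\R^{d_\theta}$.
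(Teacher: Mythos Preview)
Your proof is correct and follows essentially the same approach as the paper: both directions rely on compactness being an intrinsic property (forward) and on closed subsets of compact spaces being compact (backward). The only minor difference is that the paper invokes the Hausdorff hypothesis in the backward direction to first conclude that $K$ is closed in $X$ before intersecting, whereas you observe directly that $X'\cap K$ is closed in $K$ from the closedness of $X'$ alone; your observation that Hausdorff is not strictly needed here is accurate.
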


\begin{proof}
	To prove one direction, let $K' \subset X'$ be a subset such that there
	exists a compact subset $K \subset X$ with $K' = K \cap X'$. Since
	$K$ is compact and $X$ is Hausdorff, $K$ is closed. $K'$ is also closed
	since it is the intersection of two closed sets. But it is also a subset of
	a compact space ($K' \subset K$) thus it is itself compact.

	For the other direction note that if $K'$ is compact (and compactness is an
	intrinsic property of a space), it does not matter if we view it as a subset
	of $K'$ or of $K$.
\end{proof}

\begin{proposition}
	Let $X$ be a Hausdorff space and $X' \subset X$ a closed subspace. A
	sequence of points $(x_i)$ in $X'$ diverges to infinity if and only if it
	diverges to infinity as a sequence in $X$.
	\label{prop:div_seq_subpaces}
\end{proposition}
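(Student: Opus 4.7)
The plan is to reduce both directions of the equivalence to the characterization of compact subsets of $X'$ provided by Proposition~\ref{prop:compact_subspaces}, using the fact that since $(x_i) \subset X'$, membership in any set $K \subset X$ is equivalent to membership in $K \cap X'$.

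For the forward direction, suppose $(x_i)$ diverges to infinity as a sequence in $X'$, and let $K \subset X$ be compact. Since $X$ is Hausdorff and $X'$ is closed, Proposition~\ref{prop:compact_subspaces} (used from the ``exists $K$'' direction read backwards: given the compact $K$ in $X$, $K \cap X'$ is compact as a subspace of $X'$) gives that $K' := K \cap X'$ is compact in $X'$. The assumption on $(x_i)$ then yields that only finitely many $x_i$ lie in $K'$, and because each $x_i$ is already in $X'$, the equality $\{i : x_i \in K\} = \{i : x_i \in K'\}$ shows only finitely many $x_i$ lie in $K$.

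For the reverse direction, suppose $(x_i)$ diverges to infinity in $X$, and let $K' \subset X'$ be compact. By Proposition~\ref{prop:compact_subspaces} there exists a compact $K \subset X$ with $K' = K \cap X'$ (one can even take $K = K'$, viewed as a subset of $X$, since compactness is intrinsic). Divergence to infinity in $X$ then gives only finitely many $x_i$ in $K$, and again, since $x_i \in X'$ for all $i$, this forces only finitely many $x_i$ in $K'$.

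There is essentially no obstacle: the technical content is entirely in Proposition~\ref{prop:compact_subspaces}, and the present proposition is just a sequential restatement. The only subtle point worth flagging is why closedness of $X'$ (rather than merely being a subspace) is needed: without it, a compact set in $X$ might intersect $X'$ in a non-compact set (cf.\ the motivating example with the open unit ball in the paragraph before this subsection), breaking the forward direction.
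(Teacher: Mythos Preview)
Your proof is correct and follows essentially the same approach as the paper: both directions reduce to Proposition~\ref{prop:compact_subspaces}, intersecting a compact set in the ambient space with $X'$ for the forward direction and lifting a compact subset of $X'$ to one in $X$ for the reverse. Your additional remarks (that one may simply take $K=K'$ in the reverse direction, and the explanation of why closedness of $X'$ is needed) are helpful clarifications not present in the paper's version.
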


\begin{proof}
	Assume that $(x_i)$ is a sequence in $X'$ diverging to infinity and let
	$K \subset X$ be any compact set. The set $K' = K \cap X$ is compact by the
	previous proposition and because $(x_i)$ is a sequence diverging to
	infinity, only a finite number of terms lie inside $K'$. The others lie in
	$X' \setminus K'$ which does not intersect $K$, thus we have that only a finite
	number of terms of the sequence can lie in $K$. 

	Conversely, assume that $(x_i)$ is a sequence in $X$ diverging to infinity
	with the property that it is also a sequence in $X'$. Let $K' \subset X'$ be
	any compact set. By Proposition \ref{prop:compact_subspaces} there exists a compact set
	$K \subset X$ such that $K' = K \cap X'$. But by assumption only finitely
	many terms of the sequence lie in $K$, hence in $K'$.
\end{proof}

Now we use the fact that $\R^{d_\theta}$ is a finite-dimensional normed space to
arrive at the intuitive notion of a sequence diverging to infinity.

\begin{proposition}
	In a finite-dimensional normed vector space $X$, a sequence $(x_i)$ diverges
	to infinity if and only if $\norm{x_i} \to \infty$.
	\label{prop:di_in_nvs}
\end{proposition}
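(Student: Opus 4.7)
The plan is to prove both directions using the Heine–Borel theorem, which characterizes compact subsets of a finite-dimensional normed vector space as exactly the closed and bounded sets.

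For the forward direction, I would assume $(x_i)$ diverges to infinity in the topological sense of Definition~\ref{def:div_to_infty} and aim to show $\|x_i\| \to \infty$. Fix any $R > 0$ and consider the closed ball $\overline{B(0,R)} = \{x \in X : \|x\| \le R\}$. This set is closed and bounded in a finite-dimensional normed space, hence compact by Heine–Borel. By hypothesis only finitely many $x_i$ lie in $\overline{B(0,R)}$, so there exists $N$ such that $\|x_i\| > R$ for all $i \ge N$. Since $R$ was arbitrary, $\|x_i\| \to \infty$.

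For the converse, I would assume $\|x_i\| \to \infty$ and take an arbitrary compact $K \subset X$. Again by Heine–Borel, $K$ is bounded, so there exists $R > 0$ with $K \subset \overline{B(0,R)}$. Because $\|x_i\| \to \infty$, only finitely many indices $i$ can satisfy $\|x_i\| \le R$, and a fortiori only finitely many $x_i$ can lie in $K$. Hence $(x_i)$ diverges to infinity in the sense of Definition~\ref{def:div_to_infty}.

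There is no real obstacle here; the statement is essentially a direct consequence of Heine–Borel and is standard. The only thing worth being careful about is that the definition of ``diverging to infinity'' used in the paper is the topological one (finitely many terms in every compact set), not the analytic one $\|x_i\| \to \infty$ — the proposition exactly bridges the two in the finite-dimensional setting, which is why it suffices to translate compactness into boundedness and invoke the definition of $\|x_i\| \to \infty$ twice.
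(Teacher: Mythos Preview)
Your proof is correct and essentially identical to the paper's: both directions use that closed balls are compact (for the forward direction) and that compact sets are bounded (for the converse), via Heine--Borel. The only cosmetic difference is that the paper first passes to $\R^d$ via a homeomorphism before invoking Heine--Borel, whereas you apply it directly in $X$.
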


\begin{proof}
	Any finite dimensional normed vector space $X$ is homeomorphic to $\R^d$
	equipped with the usual topology, where $d = \dim X$, thus we will write
	$\R^d$ instead of $X$.

	Suppose $(x_i)$ diverges to infinity. This means that for any compact set
	$K\subseteq \R^d$ there are only finitely many values for $i$ for which $x_i
	\in K$. Let $r$ any positive number. The closed ball of radius $r$ around
	$0$, $\overline{B_r(0)}$, is compact, thus, there exists $i_r$ such that $\forall j
	\geq i_r$, $x_j \notin \overline{B_r(0)}$, i.e. $\norm{x_j} > r$.

	Conversely, suppose that $(x_i)$ is a sequence in $\R^n$ such that
	$\norm{x_i} \to \infty$. Let $K \subseteq \R^n$ be any compact set (i.e.
	closed and bounded for $\R^d$). There exists an $r_K > 0$ such that $K
	\subseteq B_{r_K}(0)$. By our hypothesis on the sequence we have that there
	exists a number $i_K \in \N$ such that $\forall j > i_K$, $\norm{x_j} > r_K$,
	i.e. $x_j \notin K$. Thus the sequence $(x_i)$ has only finitely many terms
	in $K$.
\end{proof}

\begin{proposition}
	Let $M$ be a properly embedded submanifold of $\R^n$. If $M$ is 
	a closed subset of $ \R^n $, a continuous function $f\colon M \rightarrow \R$ is proper
	if and only if for any sequence $\norm{\theta_i} \rightarrow \infty$ with
	$\left\{  \theta_i\right\}_{i \in \N} \subset M$, the sequence
	$(f(\theta_i))$ diverges to infinity.
	\label{prop:prop_in_manifolds}
\end{proposition}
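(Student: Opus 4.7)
The plan is a direct chaining of the three preceding propositions, with the only subtlety being to keep track of where ``diverges to infinity'' is interpreted. Because $M$ is a properly embedded submanifold of $\R^n$, its intrinsic topology agrees with the subspace topology it inherits from $\R^n$; hence $M$ is Hausdorff and second countable as a subspace of the second-countable Hausdorff space $\R^n$, which is exactly what is needed to invoke Proposition~\ref{prop:proper_seq}.

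First I would apply Proposition~\ref{prop:proper_seq} to the continuous map $f\colon M \to \R$. This reduces properness of $f$ to the statement that $f$ sends every sequence $(\theta_i) \subset M$ that diverges to infinity \emph{in $M$} to a sequence $(f(\theta_i))$ that diverges to infinity \emph{in $\R$}. Both implications in the biconditional are folded into this single reformulation, so nothing else is needed from the ``proper $\iff$ sequential-proper'' equivalence.

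Second, using the hypothesis that $M$ is a closed subset of the Hausdorff space $\R^n$, I would invoke Proposition~\ref{prop:div_seq_subpaces} to identify the two notions of divergence for a sequence lying in $M$: $(\theta_i) \subset M$ diverges to infinity in $M$ if and only if it diverges to infinity in the ambient space $\R^n$. This is precisely where the closedness assumption is used, and where a subtle counterexample (such as the open-ball example from the preamble) would appear if closedness were dropped.

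Finally, I would apply Proposition~\ref{prop:di_in_nvs} to the finite-dimensional normed space $\R^n$ to convert ``$(\theta_i)$ diverges to infinity in $\R^n$'' into the concrete condition $\|\theta_i\| \to \infty$, and to the finite-dimensional normed space $\R$ to convert ``$(f(\theta_i))$ diverges to infinity'' into the matching concrete condition. Stitching these three reformulations together yields the stated biconditional. There is no real obstacle here; the whole difficulty is bookkeeping, in particular making sure that the two distinct meanings of ``diverges to infinity'' on $M$ (intrinsic vs.\ inherited from $\R^n$) are reconciled via closedness before passing to the norm condition.
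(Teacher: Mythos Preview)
Your proposal is correct and follows essentially the same approach as the paper: reduce properness to the sequential criterion via Proposition~\ref{prop:proper_seq}, then use Proposition~\ref{prop:div_seq_subpaces} (with closedness of $M$) and Proposition~\ref{prop:di_in_nvs} to translate divergence in $M$ into the norm condition in $\R^n$. If anything, your write-up is more explicit than the paper's, which omits the reference to Proposition~\ref{prop:proper_seq} and merely records that the embedding is a homeomorphism onto its image before invoking the other two propositions.
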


\begin{proof}
	Let $ i:M \to \R^n $ be the embedding of $ M $ in $ \R^n $.
	We note that a set $K \subset M$ is compact if and only if there is a
	compact set $\tilde{K} \subset i(M)$ such that $\tilde{K} = i(K)$ (since $M$
	is homeomorphic to its image under $i$).

	Applying Propositions
	\ref{prop:div_seq_subpaces} and \ref{prop:di_in_nvs} finishes the proof.
\end{proof}

\subsection{$\lambda_i$ is a proper map}

To prove that each non-zero eigenvalue function $\lambda_i$ is proper, it suffices to
show that the smallest non-zero eigenvalue $\lambda_{\text{min}}$ is proper. Since
$\lambda_i \geq \lambda_{\text{min}}$ we have that if $\lambda_{\text{min}}$ diverges to infinity,
then $\lambda_i$ also diverges to infinity for all $i$ such that $\lambda_i$ is
not zero.

We can write $\lambda_{\text{min}}$ as

\begin{equation}
	\lambda_{\text{min}}(\theta) = \inf_{e \in S^{d_\theta},\\ e \notin \ker
	D\mu_d(\theta)} \norm{(D\mu_d(\theta)e)^T H (D\mu_d(\theta)e)}.
	\label{eq:min_eigenvalue}
\end{equation}

Observe that $\lambda_{\text{min}}$ is a well-defined function for all $\theta \in
\R^{d_\theta}$ not just for points in $M$. Thus our work would be greatly
simplified if we could show that $\lambda_{\text{min}}$ diverges to infinity for any
sequence of points in $\R^{d_\theta}$ diverging to infinity not just for those
who are also in $M$. This is not the case, since we can always find directions
in $\R^{d_\theta}$ such that $\lambda_{\text{min}}$ is zero along them. However, we can use
a ``trick'' to avoid these ``bad'' directions. Recall that $M$ was defined by $d_0d_h$ equations

\begin{equation}
	F_i(\theta) \coloneqq \mu_{d,i}(\theta) - c_i = 0,
	\label{eq:eq_def_M}
\end{equation}

where $\mu_{d,i}(\theta)$ and $c_i$ are the $i$'th entries in the matrices
$\mu(\theta)$ and $W^*$, respectively. Since these functions are by
definition zero on $M$, for any function $F\colon \R^{d_0d_h} \rightarrow \R$
which maps zero to zero, we can add $F(F_1(\theta), \cdots, F_{d_0d_h}(\theta))$
to $\lambda_{\text{min}}$ without changing the value of $\lambda_{\text{min}}$ on $M$. Define
$\tilde{\lambda}$ by 

\begin{equation}
	\tilde{\lambda}(\theta) \coloneqq \lambda_{\text{min}}(\theta) +
	\norm{\theta}^{2h}\sum_{i=1}^{d_0d_h} F_i(\theta)^2.
	\label{eq:lambda_modified}
\end{equation}

This function is well defined on $\R^{d_\theta}$. We show
that $\tilde{\lambda}$ tends to infinity as the norm of its argument tends to
infinity. We have the following equalities

\begin{align}
	\begin{split}
		\tilde{\lambda}\left( \theta \right)& =\tilde{\lambda}\left(\norm{\theta}\frac{\theta}{\norm{\theta}}\right),\label{eq:prop_lam_tilde_1}
	\end{split}
	\\
	\begin{split}
		& =\lambda\left(\norm{\theta}\frac{\theta}{\norm{\theta}}\right) + \norm{\theta}^{2h} \sum_{i=1}^{d_0d_h} F_i\left(\theta\right)^2,\label{eq:prop_lam_tilde_2}
	\end{split}
	\\
	\begin{split}
		& = \norm{\theta}^{2(h-1)}\lambda\left(\frac{\theta}{\norm{\theta}}\right)
		+
		\norm{\theta}^{2(h-1)}\sum_{i=1}^{d_0d_h}F_i\left(\frac{\theta}{\norm{\theta}}\right)
		-
		\norm{\theta}^{2(h-1)}\sum_{i=1}^{d_0d_h}F_i\left(\frac{\theta}{\norm{\theta}}\right)\\
		&   + \norm{\theta}^{2h} \sum_{i=1}^{d_0d_h}
		F_i(\theta)^2,\label{eq:prop_lam_tilde_3}
	\end{split}
	\\
	\begin{split}
		& =\norm{\theta}^{2(h-1)}\tilde{\lambda}(\frac{\theta}{\norm{\theta}}) +
		\sum_{i=1}^{d_0d_h} \left[
			\left(\norm{\theta}^hF_i(\theta)\right)^2 -
			\left(\norm{\theta}^{(h-1)}F_i(\frac{\theta}{\norm{\theta}})\right)^2 
		\right],\label{eq:prop_lam_tilde_4}
	\end{split}
	\\
	\begin{split}
		&= \norm{\theta}^{2(h-1)}\tilde{\lambda}(\frac{\theta}{\norm{\theta}})
		+
		\sum_{i=1}^{d_0d_h}R_i(\theta).\label{eq:prop_lam_tilde_5}
	\end{split}
\end{align}

For \eqref{eq:prop_lam_tilde_2} we use the definition of $\tilde{\lambda}$;
\eqref{eq:prop_lam_tilde_3} uses the fact that $D\mu_d(c\theta) =
c^{h-1}D\mu_d(\theta)$ and we add and subtract the sums where the $F_i$'s have
unit vectors, $\frac{\theta}{\norm{\theta}}$, as argument; for
\eqref{eq:prop_lam_tilde_4} we use the definition of
$\tilde{\lambda}$ again; and for \eqref{eq:prop_lam_tilde_5} we define the
functions $R_i$:

\begin{equation}
	R_i\left( \theta \right) \coloneqq \left(\norm{\theta}^hF_i(\theta)\right)^2
	- \left(\norm{\theta}^{(h-1)}F_i(\frac{\theta}{\norm{\theta}})\right)^2. 
	\label{eq:def_ri}
\end{equation}

We now analyze the asymptotic behaviour of the two terms in
\eqref{eq:prop_lam_tilde_5}.

\begin{enumerate}
	\item $\norm{\theta}^{2(h-1)}\tilde{\lambda}(\frac{\theta}{\norm{\theta}})$.

		Since the sphere $S^{d_\theta}$ is compact and $\tilde{\lambda}$ is a
		continuous function, it attains its minimal value on the sphere,
		$\tilde{\lambda}_m$. This value is strictly greater than zero. To see
		this, consider two cases: if $\frac{\theta}{\norm{\theta}} \in M$, then
		$\lambda\left(\frac{\theta}{\norm{\theta}}\right)$ is positive since
		$D\mu\left(\frac{\theta}{\norm{\theta}}\right)$ has full rank; if
		$\frac{\theta}{\norm{\theta}} \notin M$, then at least one of the
		equations defining $M$, say the $j$'th, is non-zero, $F_j\left( \frac{\theta}{\norm{\theta}}
		\right) \neq 0$, thus $\tilde{\lambda}\left(
		\frac{\theta}{\norm{\theta}} \right)$ is again strictly positive.

		We can conclude that 
		$$ \norm{\theta}^{2(h-1)}\tilde{\lambda}(\frac{\theta}{\norm{\theta}})
		\geq \norm{\theta}^{2(h-1)}\tilde{\lambda}_{\text{min}} \rightarrow \infty \text{ as
		} \norm{\theta} \rightarrow \infty.$$

	\item $R_i(\theta)$.

		Assume that $\mu_{d,i}\left( \frac{\theta}{\norm{\theta}} \right) \neq 0$ or
		$c_i \neq 0$. If both are zero, then $R_i(\theta) = 0$, thus
		$R_i(\theta)$ is lower bounded. We know that at
		least one $c_i \neq 0$ since otherwise $W^*$ would have rank
		zero. We now show that $R_i(\theta)$ is lower bounded. Writing it out
		explicitly we have
		\begin{equation*}
			\begin{split}
				R_i(\theta) =& \left( \norm{\theta}^{4h} - \norm{\theta}^{2(h-1)} \right)\mu_{d,i}\left(
				\frac{\theta}{\norm{\theta}} \right)^2 - 2\left( \norm{\theta}^{3h} -
				\norm{\theta}^{2(h-1)} \right) \mu_{d,i}\left( \frac{\theta}{\norm{\theta}}
				\right)c_i \\
						&+ \left(  \norm{\theta}^{2h} - \norm{\theta}^{2(h-1)}\right) c_i^2.
			\end{split}
		\end{equation*}
	
		If $\mu_{d,i}\left( \frac{\theta}{\norm{\theta}} \right) = 0$, then
		$R_i(\theta) = \left(  \norm{\theta}^{2h} -
		\norm{\theta}^{2(h-1)}\right) c_i^2$ which is lower bounded. If
		$\mu_{d,i}\left( \frac{\theta}{\norm{\theta}} \right) \neq 0$ then the term
		in degree $4h$ dominates and $R_i(\theta)$ is lower bounded. If
		remains to show that $R_i(\theta)$ is lower bounded for any $\theta \in
		\R^{d_\theta}$. This follows since $\mu$ is a continuous function and
		the sphere is compact, thus $\mu$ attains its maximum and minimum on the
		sphere and we have that $\mu_{d,i}(e)^2$ takes all values in the interval
		$\left[ 0, \max(\mu_{d,i,m}^2, \mu_{d,i,M}^2) \right]$ where $\mu_{d,i,m}$ and
		$\mu_{d,i,M}$ are the minimum and the maximum values of $\mu_{d,i}$ on the
		sphere, respectively. Since the interval is compact, we get that
		$R_i(\theta)$ is lower bounded for any $\theta \in \R^{d_\theta}$.
\end{enumerate}

We see that one of the terms defining $\tilde{\lambda}$ is lower bounded,
whereas the other one diverges to infinity, hence their sum also diverges to
infinity. This proves the following theorem.

\begin{theorem}[Non-zero eigenvalues of $H_L$ are proper maps]
	Each non-zero $\lambda_i\colon M \rightarrow \R$ is a proper map. 
	\label{thm:lambda_prop_map}
\end{theorem}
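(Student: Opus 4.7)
The plan is to reduce the statement to showing that the smallest non-zero eigenvalue $\lambda_{\min}$ of $H_L$ restricted to $M$ is proper: since $\lambda_i \geq \lambda_{\min}$ pointwise for each non-zero eigenvalue, properness transfers. Because $M$ is a closed embedded submanifold of $\R^{d_\theta}$ (being the preimage of a single matrix under the continuous map $\mu_d$), Proposition~\ref{prop:prop_in_manifolds} reduces properness to the sequential criterion $\lambda_{\min}(\theta_n) \to \infty$ whenever $(\theta_n) \subset M$ satisfies $\norm{\theta_n} \to \infty$. The leverage I would use is homogeneity of the matrix multiplication map: $\mu_d$ is multilinear of total degree $h$ in the parameters, so $\mu_d(c\theta) = c^h \mu_d(\theta)$ and $D\mu_d(c\theta) = c^{h-1} D\mu_d(\theta)$.

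The natural idea is to write $\theta = \norm{\theta} \cdot \theta/\norm{\theta}$, factor out a positive power of $\norm{\theta}$ using homogeneity, and then appeal to compactness of the unit sphere. The snag is that $\lambda_{\min}(\theta/\norm{\theta})$ can vanish when $\theta/\norm{\theta} \notin M$, so the factorisation alone does not yield divergence. I would fix this by adding a penalty that vanishes on $M$: let $F_i(\theta) \coloneqq \mu_{d,i}(\theta) - c_i$ denote the equations cutting out $M$, and define $\tilde\lambda(\theta) \coloneqq \lambda_{\min}(\theta) + \norm{\theta}^{2h} \sum_i F_i(\theta)^2$ as a continuous function on all of $\R^{d_\theta}$, which coincides with $\lambda_{\min}$ on $M$. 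On the unit sphere $\tilde\lambda$ is strictly positive: either $\theta/\norm{\theta} \in M$ and $\lambda_{\min}$ is positive there by the full-rank condition of Proposition~\ref{prop:matrix_mul_map}, or some $F_i(\theta/\norm{\theta}) \neq 0$ so the penalty contributes; by compactness $\tilde\lambda$ then attains a strictly positive minimum $\tilde\lambda_m$ on the sphere. Homogeneity lets me decompose $\tilde\lambda(\theta) = \norm{\theta}^{2(h-1)} \tilde\lambda(\theta/\norm{\theta}) + \sum_i R_i(\theta)$, where the leading term grows at least as $\norm{\theta}^{2(h-1)} \tilde\lambda_m$.

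The main obstacle is bounding the remainder terms $R_i(\theta)$ below. Expanding $F_i$ via degree-$h$ homogeneity of $\mu_{d,i}$ turns each $R_i(\theta)$ into an explicit polynomial in $\norm{\theta}$ with coefficients built from $\mu_{d,i}(\theta/\norm{\theta})$ and $c_i$. A short case analysis (vanishing vs.\ non-vanishing of $\mu_{d,i}(\theta/\norm{\theta})$, combined with the observation that at least one $c_i \neq 0$ since $W^*$ has rank $r \geq 1$) together with the boundedness of $\mu_{d,i}$ on the compact sphere delivers a uniform lower bound for each $R_i$. Once these bounds are in hand, $\tilde\lambda(\theta_n) \to \infty$ along any sequence with $\norm{\theta_n} \to \infty$, and since $\tilde\lambda = \lambda_{\min}$ on $M$, this completes the proof via Proposition~\ref{prop:prop_in_manifolds}.
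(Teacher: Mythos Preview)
Your proposal is correct and follows essentially the same approach as the paper: reduction to $\lambda_{\min}$, the penalised auxiliary function $\tilde\lambda(\theta) = \lambda_{\min}(\theta) + \norm{\theta}^{2h}\sum_i F_i(\theta)^2$, the homogeneity-based decomposition into $\norm{\theta}^{2(h-1)}\tilde\lambda(\theta/\norm{\theta}) + \sum_i R_i(\theta)$, positivity on the sphere via the same two-case analysis, and the lower bound on the remainders $R_i$ are all exactly what the paper does.
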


\end{appendices}

\printbibliography

\end{document}